\title[Universal Online Learning with Bounded Loss: Reduction to Binary Classification]{Universal Online Learning with Bounded Loss: \\Reduction to Binary Classification}
\newcommand{\comment}[1]{}
\newcommand{\suol}{\text{SUOL}}
\newcommand{\smv}{{\text{SMV}}}
\newcommand{\nn}{\text{NN}}
\newcommand{\Acal}{\mathcal{A}}
\newcommand{\Bcal}{\mathcal{B}}
\newcommand{\Ecal}{\mathcal{E}}
\newcommand{\Fcal}{\mathcal{F}}
\newcommand{\Lcal}{\mathcal{L}}
\newcommand{\Pcal}{\mathcal{P}}
\newcommand{\Xcal}{\mathcal{X}}
\newcommand{\Ycal}{\mathcal{Y}}
\newcommand{\Ebb}{\mathbb{E}}
\newcommand{\Nbb}{\mathbb{N}}
\newcommand{\Zbb}{\mathbb{Z}}
\newcommand{\Pbb}{\mathbb{P}}
\newcommand{\Rbb}{\mathbb{R}}
\newcommand{\Xbb}{\mathbb{X}}
\newcommand{\Ybb}{\mathbb{Y}}
\newcommand{\1}{\mathbbm{1}}
\definecolor{dark_red}{rgb}{0.2,0,0}
\DeclareMathOperator*{\argmax}{arg\,max}
\DeclareMathOperator*{\argmin}{arg\,min}
\renewenvironment{proof}[1][]{\par\noindent{\bf Proof #1\ }}{\hfill\BlackBox\\[2mm]}
\begin{document}

\maketitle


\begin{abstract}%
We study universal consistency of non-i.i.d. processes in the context of online learning. A stochastic process is said to admit \emph{universal consistency} if there exists a learner that achieves vanishing average loss for any measurable response function on this process. When the loss function is unbounded, \cite{blanchard2022universal} showed that the only processes admitting strong universal consistency are those taking a finite number of values almost surely. However, when the loss function is bounded, the class of processes admitting strong universal consistency is much richer and its characterization could be dependent on the response setting \citep{hanneke2021open}. In this paper, we show that this class of processes is independent from the response setting thereby closing an open question of \cite{hanneke2021learning} (Open Problem 3). Specifically, we show that the class of processes that admit universal online learning is the same for binary classification as for multiclass classification with countable number of classes. Consequently, any output setting with bounded loss can be reduced to binary classification. Our reduction is constructive and practical. Indeed, we show that the \emph{nearest neighbor} algorithm is transported by our construction. For binary classification on a process admitting strong universal learning, we prove that nearest neighbor successfully learns at least all finite unions of intervals. 
\end{abstract}

\begin{keywords}%
  online learning, universal consistency, open problem, statistical learning, invariance, bounded loss, stochastic processes, nearest neighbour
\end{keywords}

\section{Introduction}
\paragraph{Problem setup.}
We consider the online learning framework where we observe a sequence of input points $\Xbb = (X_t)_{t\geq 1}$ from a separable metric \textit{instance space} $(\Xcal,\rho)$ and the associated target values $\Ybb = (Y_t)_{t\geq 1}$ from some separable near-metric \textit{value space} $(\Ycal,\ell)$. We assume that the output data stream $\Ybb$ is generated from $\Xbb$ in a noiseless fashion through an unknown function $f^*:\Xcal\to\Ycal$, i.e. we have $Y_t = f^*(X_t)$ for all $t\geq 1$. Learning occurs sequentially: at a time step $t\geq 1$, the learner observes a new input $X_t$ (covariates) and outputs a prediction $\hat Y_t$ based on the historical data $(\Xbb_{\leq t-1},\Ybb_{\leq t-1})$. The performance of the learning rule used by the learner is measured by the long-run average loss $\frac{1}{T}\sum_{t=1}^T\ell(Y_t,\hat{Y}_t)$. We say that a learning rule is \emph{universally consistent} under $\Xbb$ if the long-run average loss converges to $0$ almost surely, for any measurable function $f^*$. If such a learning rule exists, we say that $\Xbb$ admits strong universal online learning. Following \cite{hanneke2021learning}, we are interested in the set $\suol$ containing processes $\Xbb$ that admit strong universal online learning. A priori, $\suol$  may depend on the setup $(\Xcal,\rho, \Ycal,\ell)$ so we may specify $\suol_{(\Xcal,\rho,\Ycal, \ell)}$. 

\paragraph{Prior work and motivations.} In universal learning, the goal is to design learning rules that are consistent with a large variety of data generating processes $(\Xbb,\Ybb)$. A celebrated example, \citet{stone1977consistent,devroye1994strong} show that the $k$-nearest neighbour learning rule with $k/\log t\rightarrow \infty$ and $k/t\rightarrow 0$ is consistent with any i.i.d process under mild hypothesis. More recently, \cite{hanneke2021universal,gyorfi2021universal,cohen2022metric} gave algorithms that are consistent for any i.i.d. process for any metric space $\Xcal$ that admits such an algorithm. In this paper, we primarily focus on \emph{strong} consistency, where we ask the average loss to decay to zero almost surely \citep{gordon1978asymptotically}. The literature has also widely investigated \emph{weak} consistency \citep{muller1987weak}, where convergence is in expectation. If randomness or noise is allowed in $f^*$, consistency is attained when the average loss converges to the loss corresponding to the best deterministic function, i.e. the Bayes loss \citep{stone1977consistent,devroye2013probabilistic}. For this reason, universal consistency is sometimes referred to as Bayes risk efficiency \citep{gordon1978asymptotically}. For simplicity, this paper assumes that $f^*$ is noiseless and rather focuses on relaxing the assumptions on the input process $\Xbb$. 

Indeed, most of the work on universal learning requires the input $\Xbb$ to be drawn i.i.d from a joint distribution \citep{stone1977consistent,haussler1994predicting, hanneke2021universal}. Alternatively it is asked to be stationary ergodic \citep{morvai1996nonparametric,gyorfi:07,gyofi2002strategies}, to satisfy a law of large numbers \citep{morvai1996nonparametric,steinwart2009learning} or to admit convergent relative frequencies \citep{hanneke2021learning}. Another line of work, \citep{littlestone1988learning,ben2009agnostic} makes no assumption on the input data stream $\Xbb$ but restricts the hypothesis class to functions $f^*$, e.g. to functions admitting finite Littlestone dimension. Many other setups have been considered, mixing restrictions on the pair $(\Xbb,f^*)$ \citep{ryabko2006pattern,urner2013probabilistic,bousquet2021theory}.

Following the work of \cite{hanneke2021learning}, we make no assumption on the input data $\Xbb$ other than the fact that it is a stochastic process. We are particularly interested in the set $\suol_{(\Xcal,\rho,\Ycal,\ell)}$ of processes $\Xbb$ that admit strong universal online learning, i.e. such that there exists a learner which achieves vanishing average loss for any choice of measurable function $f^*:\Xcal\rightarrow \Ycal$. When the loss function is unbounded, i.e. $\sup_{y_1,y_2}\ell(y_1,y_2) =\infty$, this set contains exactly the processes that take a finite number of values almost surely \citep{blanchard2022universal} and is therefore independent of the value space $(\Ycal,\ell)$. When the loss function is bounded, i.e. $\sup_{y_1,y_2}\ell(y_1,y_2) <\infty$, \citet{hanneke2021learning} conjectured that such processes are characterized by a simple condition that we call $\smv$, standing for sublinear measurable visits, which is also independent of the setting. He posed as an open question whether $\suol_{(\Xcal,\rho,\Ycal,\ell)}$ would depend on the setting $(\Ycal,\ell)$ subject to the loss being bounded (\citet{hanneke2021learning}, Open Problem 3).  

One interest of characterizing the set $\suol$ is to identify learning rules which are universally consistent for all processes in $\suol$, i.e. that achieve universal consistency whenever it is possible \citep{hanneke2021learning}. These \emph{optimistically universal} learning rules enjoy the convenient property that if they fail to achieve universal learning for a specific input process $\Xbb$, any other online learning rule would fail as well. For unbounded loss, the simple \emph{memorization} learning rule was shown to be optimistically universal \citep{blanchard2022universal} for any setting $(\Ycal,\ell)$. For bounded loss, an important question---very related to (Open Problem 3 \cite{hanneke2021learning})---is whether the existence of an optimistically universal learning rule depends on the setting $(\Ycal, \ell)$.


\paragraph{Contributions.} We close a conjecture formulated in \cite{hanneke2021learning} by showing that the set of universally learnable sequences $\suol$ is invariant with respect to the setting $(\Ycal,\ell)$ when the loss is bounded. Precisely, we show that any learning task can be reduced to the binary classification setting $(\{0,1\},\ell_{01})$ where $\ell_{01}$ is the binary indicator loss. Our main result is stated as follows.

\begin{theorem}
\label{thm:invariance_suol}
For any separable near-metric space $(\Ycal,\ell)$ with $0<\bar \ell <\infty$, we have $\textnormal \suol_{(\Xcal,\rho,\Ycal,\ell)}=\textnormal\suol_{(\Xcal,\rho,\{0,1\},\ell_{01})}.$
\end{theorem}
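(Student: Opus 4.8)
The plan is to prove the two inclusions separately. The inclusion $\suol_{(\Xcal,\rho,\Ycal,\ell)}\subseteq\suol_{(\Xcal,\rho,\{0,1\},\ell_{01})}$ is the easy one. Since $\bar\ell>0$, fix $y_0\neq y_1\in\Ycal$ with $\ell(y_0,y_1)=c>0$, and lift any binary target $g:\Xcal\to\{0,1\}$ to the measurable $\Ycal$-valued target $\tilde g:=y_{g(\cdot)}$. Given $\Xbb\in\suol_{(\Ycal,\ell)}$, run the corresponding $\Ycal$-universal learner on $\tilde g$ to obtain predictions $\hat Y_t$ with $\frac1T\sum_{t\le T}\ell(\tilde g(X_t),\hat Y_t)\to 0$ a.s., and round each $\hat Y_t$ to the nearer of $y_0,y_1$. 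By the near-triangle inequality, whenever this rounded prediction disagrees with $g(X_t)$ one has $\ell(\hat Y_t,\tilde g(X_t))\ge c/(2C)$ for the near-metric constant $C$, so the $0/1$-loss is at most $\tfrac{2C}{c}\ell(\tilde g(X_t),\hat Y_t)$ and its average vanishes. Hence $\Xbb$ admits strong universal online learning for binary classification.

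The substance is the reverse inclusion: a binary-universal learner for $\Xbb$ must be bootstrapped into a universal learner for an arbitrary bounded-loss setting $(\Ycal,\ell)$. I would first reduce to a \emph{countable} value space: using separability, pick for each scale $1/m$ a countable $1/m$-net $\Ycal_m\subseteq\Ycal$ and the quantization $f^*_m:=q_m\circ f^*:\Xcal\to\Ycal_m$; a universal learner for each countable task $f^*_m$ can be aggregated over $m$ (commit to scale $m$ on a long enough time epoch, with epoch lengths chosen so the switching cost is negligible) into a single learner whose average loss is eventually within $O(1/m)$ for every $m$, hence vanishes. This leaves the core reduction, countable $\Ycal\to$ binary. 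Enumerate $\Ycal=\{z_1,z_2,\dots\}$, encode $z_n$ by a self-delimiting binary codeword, and for each coordinate $i\ge 1$ let $b_i(x)$ be the $i$-th bit of the codeword of $f^*(x)$ (set to $0$ past the codeword length); each $b_i$ is measurable, so running an independent binary-universal learner $\mathrm{ALG}_i$ on $\Xbb$ with target $b_i$ gives $\frac1T|\{t\le T:\text{bit }i\text{ mispredicted at }t\}|\to 0$ a.s. for every fixed $i$. The $\Ycal$-learner outputs, at round $t$, the value obtained by decoding the predicted bit string, unless $X_t$ has already occurred, in which case it simply replays the previously observed label — a memorization fallback that is exact on repeated points.

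The error analysis then splits the mistakes at a prefix length $K$: a round incurs positive loss only if $X_t$ is fresh and some bit among the first $L_t:=|c(f^*(X_t))|$ is wrong, so the average loss is at most $\bar\ell\sum_{i\le K}\epsilon_i(T)$ plus $\bar\ell$ times the frequency of fresh rounds with $L_t>K$. For fixed $K$ the first term vanishes a.s.\ (finitely many consistent bit-learners), and letting $K\to\infty$ would finish the proof \emph{provided the second term can be driven to $0$}. This is the main obstacle. In general a process may spend a positive fraction of time on values of arbitrarily long codeword (e.g.\ revisiting a point over ever-growing blocks), so a fixed enumeration does not suffice. I would resolve it in two ways at once: choosing the enumeration \emph{adaptively}, so that already-observed values receive short codewords and long codewords are assigned only upon genuinely new values; and invoking the fact that any process in $\suol_{(\{0,1\},\ell_{01})}$ makes only a sublinear number of first visits to the cells of any countable measurable partition — equivalently, the frequency of rounds on which $f^*(X_t)$ is a previously-unseen value vanishes. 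This last fact is itself proved by a randomized-labelling / diagonalization argument (put i.i.d.\ fair coins on the values of a countable measurable $f^*$; if new values recurred at positive frequency, the forced-error rate would not vanish, contradicting membership in $\suol_{(\{0,1\},\ell_{01})}$). Combined with the memorization fallback, which absorbs all repeated points at zero cost, this bounds the deep-codeword term, and the argument closes.

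The remaining work I expect to be routine: verifying measurability of all the constructed bit- and quantization-functions; checking that $\suol$ is unchanged under the harmless relabellings of $\Xcal$ used to run the parallel bit-learners; and making the epoch-based aggregation over scales $1/m$ precise (including an a.s.\ Borel–Cantelli-type bookkeeping so that the countably many ``$\epsilon_i(T)\to 0$'' and ``$f^*_m$ learned'' events hold simultaneously). The construction is explicit, and I would note at the end that if the bit-learners are taken to be the nearest-neighbour rule, the resulting $\Ycal$-learner is the ``transported'' nearest-neighbour rule referred to in the introduction.
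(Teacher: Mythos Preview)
Your easy direction is correct and matches the paper's Proposition~\ref{prop:binary_izi}, and your reduction of a general bounded-loss $(\Ycal,\ell)$ to a countable value space via dense nets and aggregation over scales is close in spirit to the paper's Theorem~\ref{thm:reduction_countable}. The genuine gap is in the countable-to-binary step.

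Running one binary learner per bit of a codeword is the natural first attempt, and you correctly isolate the obstruction: the rates $\epsilon_i(T)\to 0$ are not uniform in $i$, so a union over bits $i\le L_t$ is uncontrolled when $L_t$ is unbounded. But your proposed fixes do not close it. An adaptive enumeration makes each bit-target $b_i$ depend on the realization of $\Xbb$ (through the order in which the values $f^*(X_t)$ first appear); universal consistency only gives ``for every fixed measurable $g$, a.s.\ $\Lcal_\Xbb(f_{\boldsymbol{\cdot}},g)\to 0$'', not ``a.s., for every $g$'', and with uncountably many possible orderings you cannot reach the realization-dependent target by a countable union bound. The $\smv$-type consequence you invoke bounds the number of \emph{distinct} values $f^*(X_t)$ takes --- hence the frequency of rounds on which $f^*(X_t)$ is new --- but not the fraction of rounds on which $f^*(X_t)$ lies beyond the first $K$ entries of an enumeration; and memorization absorbs only repeated $X_t$'s. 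The case ``$X_t$ fresh, $f^*(X_t)$ already seen but with large index'' is covered by none of your three mechanisms. The paper discusses exactly this non-uniformity obstruction after Lemma~\ref{lemma:finite_classification} and explains why the class-by-class (equivalently, bit-by-bit) approach does not extend.

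The paper's actual device is different in kind. Instead of countably many fixed binary targets, draw a \emph{random} subset $\sigma\subset\Nbb$ (each integer included independently with probability $1/2$) and run the binary learner on the single target $f^*_\sigma=\1[f^*(\cdot)\in\sigma]$. Fubini turns ``for every fixed $\sigma$, a.s.\ consistent'' into ``a.s.\ over $\Xbb$, for $\Pbb_\sigma$-a.e.\ $\sigma$ consistent''. The learner then outputs the smallest $i$ with $p^t(i):=\Pbb_\sigma[\text{prediction consistent with hypothesis }f^*(X_t)=i]>3/4$; the true label has $p^t\approx 1$, any wrong label has $p^t\approx 1/2$, and the separation is governed by a $\sigma$-averaged per-round error $e_t$ whose time-average is at most $\epsilon$ after a finite time. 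The pointwise bound $\1[e_t\ge 1/8]\le 8e_t$ then yields sublinear mistakes. Taking an expectation over $\sigma$ is precisely what replaces the impossible union bound over coordinates --- this is the idea missing from your sketch.
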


\noindent
This shows that to characterize the set $\suol$ it suffices to focus on universal binary classification. Our work builds upon \cite{hanneke2021learning} which proves that universal learning can be reduced to either binary classification $(\{0,1\},\ell_{01})$ or \emph{multiclass} classification with countable number of labels $(\Nbb,\ell_{01})$. Thus, we show the invariance of $\suol$ to the learning setting by proving that universal binary classification and universal countably-many classes classification are equivalent. Further, our proof is constructive and therefore would provide a construction of an optimistically universal learning rule for any setting $(\Ycal,\ell)$ given an optimistically universal learning rule for binary classification---if such learning rule exists.

\begin{theorem}
\label{thm:invariance_optimistic}
The existence of an optimistically universal learning rule is invariant to the output space $(\Ycal,\ell)$ when $0<\bar \ell <\infty$. In particular, provided an optimistically universal learning rule for binary classification $(\{0,1\},\ell_{01})$ one can construct an optimistically universal learning rule for a general setup $(\Ycal,\ell)$ with $0<\bar \ell <\infty$.
\end{theorem}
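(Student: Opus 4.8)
The plan is to exploit the fact that the proof of Theorem~\ref{thm:invariance_suol} is constructive: it transports an optimistically universal binary learner into a learner for any $(\Ycal,\ell)$, while conversely $(\{0,1\},\ell_{01})$ is essentially a special case of $(\Ycal,\ell)$. So I would prove the two implications of the invariance separately. By \citet{hanneke2021learning} it suffices to work with the binary setting $(\{0,1\},\ell_{01})$ and the countable multiclass setting $(\Nbb,\ell_{01})$, between which a general separable near-metric $(\Ycal,\ell)$ with $0<\bar\ell<\infty$ can be sandwiched by quantizing $(\Ycal,\ell)$ at finer and finer scales.

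For the direction ``optimistically universal for $(\Ycal,\ell)$ $\Rightarrow$ optimistically universal for $(\{0,1\},\ell_{01})$'': since $\bar\ell>0$, fix $y_0,y_1\in\Ycal$ with $\ell(y_0,y_1)>0$, embed $\{0,1\}$ into $\Ycal$ by $0\mapsto y_0,\,1\mapsto y_1$, run the given learner $g$ on the relabeled stream, and round its $\Ycal$-valued prediction to the binary label whose image is $\ell$-closest. The near-triangle inequality for $\ell$ shows that every binary mistake costs $\ell$-loss at least a positive constant, so the average $0/1$ loss is bounded by a constant times the average $\ell$-loss; combined with Theorem~\ref{thm:invariance_suol} (which makes the sets of learnable processes coincide) this shows the rounded rule is universally consistent on every $\Xbb\in\suol_{(\Xcal,\rho,\{0,1\},\ell_{01})}$. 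For the reverse, harder direction I would take an optimistically universal binary learner $h$ and run the transport construction of Theorem~\ref{thm:invariance_suol}: encode each countable label as a sequence of bits (say of a prefix code), run an independent copy of $h$ on each bit-sub-problem, and reassemble a label prediction from the bit-predictions via a truncation/weighting so that only finitely many copies affect round $t$ and the not-yet-resolved tail contributes vanishing average loss; then reduce general $(\Ycal,\ell)$ to the countable case by replacing $\Ycal$ with a countable dense subset and quantizing $\ell$ at a decreasing sequence of scales. Each step preserves consistency, so if $h$ is consistent on $\Xbb$ for every induced bit-sub-problem then the assembled rule is consistent on $\Xbb$ for $f^*$; since $h$ is optimistically universal and, by Theorem~\ref{thm:invariance_suol}, $\suol$ is the same set in all three settings, every $\Xbb\in\suol_{(\Xcal,\rho,\Ycal,\ell)}$ lies in $\suol_{(\Xcal,\rho,\{0,1\},\ell_{01})}$, whence $h$ is consistent on it for each sub-problem and the assembled rule is universally consistent on $\Xbb$.

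The main obstacle is making the transport construction provably consistency-preserving. Two points need care: (i) the bit-sub-problems induced by a measurable $f^*$ must themselves be measurable targets so that optimistic universality of $h$ legitimately applies to each of them, and they must individually inherit the $\smv$/learnability structure of $\Xbb$; and (ii) a single $\Ycal$-valued prediction is assembled from infinitely many bit-predictions, so the scheme must guarantee that at time $t$ the average loss coming from bits not yet certified correct by $h$'s per-sub-problem consistency, together with the quantization bias, both tend to $0$ almost surely. Getting this bookkeeping right is where essentially all the work lies; once it is done, Theorem~\ref{thm:invariance_optimistic} follows from Theorem~\ref{thm:invariance_suol} and the reduction of \citet{hanneke2021learning} with no additional idea.
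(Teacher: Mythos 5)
Your easy direction (embedding $\{0,1\}$ into $\Ycal$ via two $\ell$-separated points and rounding the prediction) is exactly the paper's argument and is fine, as is the observation that everything must be process-independent so that consistency on each $\Xbb\in\suol$ yields optimistic universality. The gap is in the hard direction, and it sits precisely where you say ``essentially all the work lies.'' Your plan --- encode each label of $\Nbb$ by a prefix code, run one copy of the binary learner per bit position, and reassemble with a truncation so that the unresolved tail contributes vanishing average loss --- is a variant of the per-label decomposition $f^{*,i}=\1(f^*(\cdot)=i)$ that the paper explicitly identifies as \emph{not} extending from finitely many to countably many classes: the binary learner's convergence rates on the countably many bit-sub-problems are not uniform in the bit index, so $\Lcal_\Xbb(\hat f,f^*;T)\le\sum_j\Lcal_\Xbb(h,b_j\circ f^*;T)$ is an infinite sum of terms each tending to $0$ with no way to interchange limit and sum. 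Truncating at depth $k(t)$ does not rescue this the way it does in the quantization argument of Theorem~\ref{thm:reduction_countable}: there, getting the first $k$ scale-predictors right already pins the answer to within $2c_\ell\epsilon_k$, whereas under $\ell_{01}$ on $\Nbb$ a single wrong bit beyond the truncation depth costs full loss $1$, and you have no a priori control on how often labels with long codes (or bits whose sub-problem has not yet converged) appear.

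The missing idea in the paper is to replace the countable family of deterministic bit-sub-problems by a \emph{continuum} of random ones: draw $\sigma\subset\Nbb$ by including each label independently with probability $1/2$, learn $f^*_\sigma=\1(f^*(\cdot)\in\sigma)$, and for each candidate label $i$ compute the probability over $\sigma$ that the binary prediction is consistent with the hypothesis $f^*(x_t)=i$ (the quantity $p^t(\cdot;i)$). A Fubini argument gives that for $\Pbb_\sigma$-almost every $\sigma$ the binary learner is consistent on $f^*_\sigma$, which yields a single uniform-in-$i$ separation ($p^t\approx 1$ for the true label versus $p^t\approx 1/2$ for all others) up to an error term whose time-average is controlled by one expectation over $\sigma$ rather than by a countable supremum. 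Without this (or some equivalent device), your reassembly step cannot be completed, so the proposal as written does not prove the theorem.
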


\noindent
Last, we make practical use of this construction to analyze the simple \emph{nearest neighbour} learning rule. In the restricted setting $\Xcal = \Rbb$ we show that for processes that admit strong universal learning, the nearest neighbour learning rule successfully learns functions $f^*:\Rbb\to\{0,1\}$ which represent finite union of intervals i.e. is capable of solving simple classification tasks.

\paragraph{Outline of the paper.} The paper is organized as follows. In Section \ref{sec:prelim} we formally introduce the universal online learning setup and recall some useful results from \citet{hanneke2021learning}. We then prove the main reduction theorems and present a class of learning rules that are preserved by this reduction in Section \ref{sec: result reduction binary}. This class includes for instance the nearest neighbor rule. Finally, we focus on this learning rule in Section \ref{sec:nearest_neighbour} proving that is consistent for simple classification tasks.

\paragraph{Notations.} In the following, $\ell_{01}$ will denote the indicator loss function $\ell_{01}(i,j)=\1({i\neq j})$ irrespective of the output space $\Ycal$. Note that it satisfies the relaxed triangle inequality with constant $c_\ell=1$. When the space $\Xcal$ is clear from the context, we simplify the notation $\suol_{(\Xcal,\Ycal,\ell)}$ to $\suol_{(\Ycal,\ell)}$. We might also omit the loss function $\ell$ when there is no ambiguity.

\section{Background and Preliminaries}\label{sec:prelim}
\subsection{Formal Setup}\label{subsec:formal_setup}
\paragraph{Instance and value space.} Recall that the sequence of inputs $\Xbb = (X_t)_{t\geq 1}$ comes from a separable metric \textit{instance space} $(\Xcal,\rho)$ and the targets $\Ybb = (Y_t)_{t\geq 1}$ belong to some separable near-metric \textit{value space} $(\Ycal,\ell)$. The near-metric loss function $\ell:\Ycal^2\to[0,\infty)$ is assumed to satisfy symmetry $\ell(y_1,y_2) =\ell(y_2,y_1)$, discernibly $\ell(y_1,y_2)=0$ if and only if $y_1=y_2$, as well as a relaxed triangle inequality $\forall y_1, y_2, y_3\in \Ycal^3: \ell(y_1,y_3)\leq c_{\ell} (\ell(y_2,y_1)+\ell(y_2,y_3))$, where $c_{\ell}$ is a finite constant. 
For instance, the squared loss that is classically used in regression settings satisfies this identity with $c_{\ell} = 2$. 
In the following, we will denote by $\bar \ell=\sup_{y_1,y_2 \in  \Ycal} \ell(y_1,y_2) $ the supremum of the loss function. In particular, the loss function is said to be \textit{bounded} when $\bar \ell<\infty$. 


\paragraph{Data generation process.} The stream of input points $\Xbb$ will be modeled as a general stochastic process with respect to the $\sigma$-algebra induced by the metric $\rho$ on $\Xcal$. This differs substantially from most of the statistical learning literature which often imposes additional hypothesis such as being i.i.d., or satisfying the law of large numbers. The stream of output data $\Ybb$ is assumed to be generated from $\Xbb$ in a noiseless fashion through an unknown fixed measurable function $f^*:\Xcal\to\Ycal$. Precisely, we have $Y_t = f^*(X_t)$ for all $t\geq 1$. When considering bounded time horizon $t\geq 1$, we will use the following notation: $\Xbb_{\leq t} = \{X_1,...,X_t\}$ and $\Xbb_{< t} = \{X_1,...,X_{t-1}\} $.

\paragraph{Online learning.} Formally, an online learning rule is defined as a sequence $f_{\boldsymbol{\cdot}} = \{f_t\}_{t=1}^{\infty}$ of measurable functions $f_t:\Xcal^{t-1}\times \Ycal^{t-1}\times\Xcal \rightarrow \Ycal$. Given $t-1$ training examples of the form $(X_i, f^*(X_i))\in \Xcal\times \Ycal$ and a new input sample $X_t\in \Xcal$, the online learning rule $f_t$ makes prediction $f_t(\Xbb_{< t}, \Ybb_{< t}, X_t)$ for $f^*(X_t)$. We wish to minimize the asymptotic loss,
\begin{equation*}
    \Lcal_{\Xbb}(f_{\boldsymbol{\cdot}},f^*) = \limsup_{T\to\infty}\frac{1}{T}\sum_{t=1}^{T}\ell(f_t(\Xbb_{< t},\Ybb_{< t}, X_{t}), f^*(X_t)).
\end{equation*}
We say that the online learning rule $f_{\boldsymbol{\cdot}}$ is consistent under the input process $\Xbb$ and for the target function $f^*$ if $\Lcal_{\Xbb}(f_{\boldsymbol{\cdot}},f^*)=0 ~~(a.s.)$.

\paragraph{Processes admitting strong universal online learning.}
We say that a stochastic process $\Xbb$ admit \emph{strong universal online learning} if there exists a learning rule $\{f_t\}_{t=1}^\infty$ that is consistent for all measurable target functions $f^*:\Xcal\to\Ycal$ on $\Xbb$. We denote by $\suol_{(\Xcal,\rho,\Ycal, \ell)}$ the set of all processes admitting strong universal online learning. Note that learning rules are allowed to depend on the process $\Xbb$. If a given learning rule is universally consistent under all processes in $\suol_{(\Xcal,\rho,\Ycal, \ell)}$ we say it is \emph{optimistically universal}. 

\comment{
\paragraph{Bounded loss.} The present paper will therefore focus on the \emph{bounded} loss functions case i.e. $\sup_{y_1, y_2\in\Ycal}\ell(y_1,y_2)<\infty$ for which both questions were left open by \cite{hanneke2021learning}. It is already known that all i.i.d. processes are contained in $\suol$ \cite{devroye2013probabilistic}. In fact, the simple 1-nearest neighbor learning rule achieves (in expectation) universal consistency for all i.i.d. processes $\Xbb$. But it is open question whether 1-nearest neighbor (1NN) is optimistically universal. In other terms, does there exist an input process $\Xbb$ such that 1NN fails to achieve consistency for some target function $f^*$ but universal consistency would still be achieved by some other---more sophisticated---learning rule? No characterization of $\suol$ is known either, although \cite{hanneke2021learning} proposed a necessary condition for belonging to $\suol$ and conjectured that it is also sufficient. We refer to this conditions as $\smv$ (sub-linear measurable visits). Intuitively, it asks that for any measurable partition of the input space $\Xcal$, the process $\Xbb$ only visits a sublinear number of its regions. Note that this condition does not depend on the choice of $(\Ycal,\ell)$. \\

\noindent{\bf Condition \smv}~~ {\it Define the set ${\normalfont\smv}$ as the set of all processes $\Xbb$ satisfying the condition that, for every disjoint sequence $\{A_k\}_{k=1}^{\infty}$ in $\Bcal$ with $\bigcup_{k=1}^{\infty}A_k=\Xcal$ (i.e., every countable measurable partition),}
\begin{equation*}
    \#\{k\in \mathbb N: A_k\cap \mathbb{X}_{<T} \neq \emptyset \} = o(T) \quad (a.s).
\end{equation*}
}

\subsection{Comparing the general setting to binary and countable classification}
One of the main contributions of the paper is to show that the set $\suol$ of input processes $\Xbb$ admitting universal learning is invariant to the choice of value space subject to the loss being bounded. To do so, we compare  $\suol_{(\Ycal,\ell)}$ for different value spaces $(\Ycal,\ell)$. Specifically, to show that $\suol_{(\Ycal,\ell)}\subset \suol_{(\Ycal',\ell')}$, one aims to construct a universally consistent learning rule for $(\Ycal',\ell')$ from a universally consistent learning rule for $(\Ycal,\ell)$ under any fixed process $\Xbb\in \suol_{(\Ycal,\ell)}$. In this section, we recall two important known inclusions that hold for any bounded loss setup $(\Ycal,\ell)$. The first result compares the general setting to binary classification.

\begin{proposition}[\cite{hanneke2021learning}]
\label{prop:binary_izi}
For any separable near-metric space $(\Ycal,\ell)$ with $0<\bar \ell<\infty$, 
\begin{equation*}
    \textnormal \suol_{(\Ycal,\ell)}\subset \textnormal\suol_{(\{0,1\},\ell_{01})}.
\end{equation*}
\end{proposition}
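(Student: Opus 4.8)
The plan is to realise binary classification as a special case of the $(\Ycal,\ell)$ setting by embedding the two labels as a fixed pair of well-separated points of $\Ycal$. Since $\bar\ell>0$, pick $y_0,y_1\in\Ycal$ with $c_0:=\ell(y_0,y_1)>0$ and let $\phi:\{0,1\}\to\Ycal$ be the (measurable) map $\phi(0)=y_0$, $\phi(1)=y_1$. Fix a process $\Xbb\in\suol_{(\Ycal,\ell)}$ and a learning rule $f_{\boldsymbol\cdot}$ that is universally consistent for $(\Ycal,\ell)$ under $\Xbb$. Given any measurable $f^*:\Xcal\to\{0,1\}$, set $g^*:=\phi\circ f^*$, which is measurable. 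Feeding $f_{\boldsymbol\cdot}$ the relabelled stream $(X_i,g^*(X_i))_{i<t}$ produces predictions $\hat Y_t:=f_t(\Xbb_{<t},\phi(\Ybb_{<t}),X_t)\in\Ycal$, and consistency of $f_{\boldsymbol\cdot}$ for $g^*$ gives $\frac1T\sum_{t\le T}\ell(\hat Y_t,g^*(X_t))\to 0$ almost surely.

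Next I would round the $\Ycal$-valued predictions to binary ones by taking the nearer of $y_0,y_1$: define the online rule $\hat Z_t=0$ if $\ell(\hat Y_t,y_0)\le\ell(\hat Y_t,y_1)$ and $\hat Z_t=1$ otherwise. This is a legitimate online learning rule for $(\{0,1\},\ell_{01})$ — it depends on $\Xbb$ only through $f_{\boldsymbol\cdot}$, which is allowed — and all the maps involved are measurable.

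The key estimate is that each rounding mistake charges a constant amount of loss to the inner rule. Suppose $\hat Z_t\neq f^*(X_t)$; without loss of generality $f^*(X_t)=0$, so $g^*(X_t)=y_0$ and $\ell(\hat Y_t,y_1)<\ell(\hat Y_t,y_0)$. By symmetry and the relaxed triangle inequality,
\[
c_0=\ell(y_0,y_1)\le c_\ell\bigl(\ell(\hat Y_t,y_0)+\ell(\hat Y_t,y_1)\bigr)\le 2c_\ell\,\ell(\hat Y_t,y_0)=2c_\ell\,\ell(\hat Y_t,g^*(X_t)),
\]
and symmetrically when $f^*(X_t)=1$. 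Hence $\ell_{01}(\hat Z_t,f^*(X_t))\le \frac{2c_\ell}{c_0}\,\ell(\hat Y_t,g^*(X_t))$ for every $t$, so
\[
\frac1T\sum_{t\le T}\ell_{01}(\hat Z_t,f^*(X_t))\le\frac{2c_\ell}{c_0}\cdot\frac1T\sum_{t\le T}\ell(\hat Y_t,g^*(X_t))\xrightarrow[T\to\infty]{}0\quad(a.s.).
\]
Since $f^*$ was an arbitrary measurable binary function, the rounded rule witnesses $\Xbb\in\suol_{(\{0,1\},\ell_{01})}$, which proves the inclusion.

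There is no serious obstacle here; the only points requiring care are bookkeeping ones: checking that $\bar\ell>0$ indeed supplies a pair of distinct points (immediate from discernibility), that the relabelling and the rounding preserve measurability (immediate), that the constructed binary rule is permitted to depend on $\Xbb$ (it is, as learning rules may), and that the separation constant $c_0$ is genuinely a positive constant fixed once and for all rather than something varying with $t$ (it is, since $y_0,y_1$ are chosen before the process is seen). The near-metric constant $c_\ell$ enters only as a harmless multiplicative factor.
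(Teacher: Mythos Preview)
Your proof is correct and follows essentially the same route as the paper: embed $\{0,1\}$ into $\Ycal$ via a fixed pair $y_0,y_1$ with $\ell(y_0,y_1)>0$, round the $\Ycal$-valued prediction to the nearer of the two, and use the relaxed triangle inequality to show each classification error forces $\ell(\hat Y_t,g^*(X_t))\ge c_0/(2c_\ell)$. The only difference is cosmetic (your constant $2c_\ell/c_0$ is in fact the correct one; the paper's displayed constant contains a minor typo).
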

This shows that binary classification is in essence the easiest setting: whenever universal online learning is achievable for some setting $(\Ycal,\ell)$, the learning rule that works on this setting should be able to perform binary classification (note that we simply require $\Ycal$ to contain at least two elements). A formal proof is given in Appendix \ref{ap:proof_binary_izi}, we note that it does not require the boundedness of $\ell$.

In the same spirit, we now recall that any process $\Xbb$ admitting strong universal online learning for countable classification $(\Nbb,\ell_{01})$ admits strong universal online learning on any separable value space $(\Ycal,\ell)$. Hence, countable classification is in essence the hardest setting. 
\begin{theorem}[\cite{hanneke2021learning}]
\label{thm:reduction_countable}
For any separable near-metric space $(\Ycal,\ell)$ with $0<\bar \ell<\infty$,
\begin{equation*}
    \textnormal\suol_{(\Nbb,\ell_{01})} \subset \textnormal\suol_{(\Ycal,\ell)}. 
\end{equation*}
\end{theorem}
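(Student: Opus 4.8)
Fix a process $\Xbb\in\suol_{(\Nbb,\ell_{01})}$; we must build, from any countable-classification rule consistent on $\Xbb$, an online rule consistent on $\Xbb$ for every measurable $f^*:\Xcal\to\Ycal$, which then witnesses $\Xbb\in\suol_{(\Ycal,\ell)}$. The plan is to discretize the value space at every finite resolution, reduce learning at each resolution to countable classification, and then aggregate the resulting $\Ycal$-valued predictors across resolutions; since the discretizations are recomputed online from the observed labels $Y_t=f^*(X_t)$, the resulting rule will not depend on $f^*$.

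\emph{Step 1 (discretization).} For each $n\geq1$, separability of $(\Ycal,\ell)$ gives a countable $\frac{1}{3c_\ell n}$-net $\{z^n_j\}_{j\geq1}$ of $\Ycal$; let $B^n_j\subseteq\Ycal$ consist of the points whose nearest net point among $\{z^n_i\}_i$, ties broken by smallest index, is $z^n_j$. By the relaxed triangle inequality each $B^n_j$ has $\ell$-diameter at most $1/n$, so $\{B^n_j\}_{j\geq1}$ is a countable measurable partition of $\Ycal$ with $z^n_j\in B^n_j$. Given $f^*$, set $h_n:=\iota_n\circ f^*:\Xcal\to\Nbb$, where $\iota_n(y)$ is the index of the cell containing $y$; each $h_n$ is measurable.

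\emph{Step 2 (per-resolution reduction).} Fix an online rule consistent on $\Xbb$ for all $\Nbb$-valued targets---one exists since $\Xbb\in\suol_{(\Nbb,\ell_{01})}$---and for every $n$ run a private copy $L_n$ of it on the relabeled stream $(X_t,h_n(X_t))_{t\geq1}$; let $j_{n,t}\in\Nbb$ be its prediction at time $t$ and $e_n(t):=\sum_{s\leq t}\1[j_{n,s}\neq h_n(X_s)]$ its running error count, so that $e_n(t)/t\to0$ a.s. If at time $t$ one were to predict $z^n_{j_{n,t}}$, the loss would be at most $1/n$ whenever $j_{n,t}=h_n(X_t)$ (both $z^n_{j_{n,t}}$ and $f^*(X_t)$ then lie in the same cell, of diameter $\leq 1/n$) and at most $\bar\ell$ otherwise---the latter bound being where $\bar\ell<\infty$ is used.

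\emph{Step 3 (aggregation --- the crux).} Pick a deterministic $n(t)\to\infty$ that grows slowly and a summable sequence $\epsilon_n\downarrow0$, set $m(t):=\max\{m\leq n(t): e_j(t)\leq t\epsilon_j\text{ for all }j\leq m\}$, and output $\hat Y_t:=z^{m(t)}_{j_{m(t),t}}$. Since $e_j(t)/t\to0$ for each fixed $j$, one verifies $m(t)\to\infty$ a.s., so $\frac1T\sum_{t\leq T}\frac1{m(t)}\to0$. For the remaining term $\frac{\bar\ell}{T}\sum_{t\leq T}\1[j_{m(t),t}\neq h_{m(t)}(X_t)]$, group the sum by the value of $m(t)$ and split the resulting sum over resolutions at a fixed cutoff $M$: the resolutions $m\leq M$ contribute at most $\bar\ell\sum_{m\leq M}e_m(T)/T$, which tends to $0$ a.s.\ for fixed $M$, whereas the resolutions $m>M$ contribute at most $\bar\ell\sum_{m>M}\epsilon_m$, because the definition of $m(t)$ forces $e_m(t)\leq t\epsilon_m$ at the last time resolution $m$ is active. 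Letting $T\to\infty$ and then $M\to\infty$ gives $\Lcal_\Xbb(\hat Y_\cdot,f^*)=0$ a.s., hence $\Xbb\in\suol_{(\Ycal,\ell)}$. This last step is the main obstacle: the rate at which $e_n(t)/t\to0$ is not uniform in $n$ and may depend on the sample path, so no resolution schedule fixed in advance works; the cutoff-$M$ split is exactly what trades the finitely many resolutions that have surely converged against the tail of $\sum_n\epsilon_n$. (An alternative that avoids a hand-built schedule is to aggregate the countably many predictors $(z^n_{j_{n,t}})_t$ by exponential weights with prior mass $2^{-n}$ on resolution $n$: since $\bar\ell<\infty$ this has regret $O(\sqrt{nT})$ against resolution $n$ for every $n$, hence asymptotic loss $\leq 1/n$ for all $n$, i.e.\ $0$ --- at the cost of a randomized rule, which is harmless here.)
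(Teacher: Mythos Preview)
Your proof is correct, modulo one slip: $m(t)$ as written depends on $e_j(t)$, whose $s=t$ summand involves $h_j(X_t)=\iota_j(Y_t)$, which is unavailable when you must output $\hat Y_t$; use $e_j(t-1)$ instead, and take $n(t)=o(t)$ so that the resulting extra $+1$ per active resolution in the tail bound washes out after dividing by $T$. The discretize-at-all-scales-then-aggregate skeleton matches the paper, but your aggregation mechanism is genuinely different. You pick the active resolution from the \emph{error history}---$m(t)$ is the deepest level at which all running error rates sit below thresholds $\epsilon_j$---and control the mistake term by the cutoff-$M$ split, which is a routine expert-tracking argument. The paper instead picks the active resolution from the \emph{current predictions alone}: writing $\Bcal^{\epsilon_k}_i$ for the cell predicted at level $k$, it sets $\hat p=\max\bigl\{p\le t:\bigcap_{k\le p}\Bcal^{\epsilon_k}_{i_k}\neq\emptyset\bigr\}$ and outputs the level-$\hat p$ prediction; if levels $1,\dots,k$ all predict correctly at step $t$ then the true $y_t$ witnesses the intersection, forcing $\hat p\ge k$ and loss $\le 2c_\ell\epsilon_k$, while the complementary event has vanishing frequency by a union bound over the $k$ levels. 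Your route is the more standard one and needs no geometry of the cells, at the price of carrying error counts and the schedule $n(t)$; the paper's nested-cell consistency device needs neither bookkeeping nor a schedule but is more bespoke to this problem.
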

A proof of this theorem is given in (\cite{hanneke2021learning} Theorem 45). It uses a number of intermediary lemmas that are not introduced in this paper. Instead, we provide novel arguments that greatly simplify the proof and that will have practical use in Section \ref{sec:nearest_neighbour}.\\

\begin{proof}
We fix a process $\Xbb \in \suol_{(\Nbb,\ell_{0,1})}$, and let $f^{\Nbb}_{\boldsymbol{\cdot}}$ be the corresponding strongly consistent learning rule. By separability, there exists a dense countable sequence $(y^i)_{i\geq 1}$ of $\Ycal$ i.e. such that $\forall y\in \Ycal: \inf_{i\in \Nbb}\ell(y^i,y)=0$. Following \cite{hanneke2021learning}, given a prediction task on $(\Ycal, \ell)$ and $\epsilon >0$, we reduce it to a countable classification using the function $h_\epsilon:y\in \Ycal\mapsto \inf\{i\in \Nbb: \ell(y^i,y)<\epsilon\} \in \Nbb$. This allows to define the $\epsilon$-learning rule $f^\epsilon_{\boldsymbol{\cdot}}$ as follows: given $x_{\leq t}\in \Xcal^t$ and $y_{<t}\in \Ycal^{t-1}$,
\begin{equation*}
    f^\epsilon_t(x_{\leq t},y_{<t},x_t) = y^{f^\Nbb_t(x_{\leq t},h_{\epsilon}(y_{<t}),x_t)}.
\end{equation*}
By construction, at each step if the prediction on $h_\epsilon$ is successful, the loss of $f_t^\epsilon$ is at most $\epsilon$. If the prediction of $h_\epsilon$ fails, we can upper bound the loss by $\bar \ell$:
\begin{equation*}
    \ell(f^\epsilon_t(x_{\leq t},y_{<t},x_t),y_t)\leq \epsilon + \bar \ell \cdot  \ell_{01}(f^\Nbb_t(x_{\leq t},h_\epsilon(y)_{<t},x_t),h_\epsilon(y)_t)
\end{equation*}
where $h_\epsilon(y):=(h_\epsilon(y_t))_{t\geq 1}$. Therefore, for any target measurable function $f^*:\Xcal\rightarrow \Ycal$, we obtain $\Lcal_\Xbb^{(\Ycal,\ell)}(f^\epsilon_{\boldsymbol{\cdot}},f^*;T)\leq \epsilon + \overline{\ell}\Lcal_{\Xbb}^{(\Nbb,\ell_{01})}(f^\epsilon_{\boldsymbol{\cdot}},h_\epsilon \circ f^*;T),$ where $\Lcal_{\Xbb}^{(\Nbb,\ell_{01})}(f^\epsilon_{\boldsymbol{\cdot}},h_\epsilon \circ f^*;T)\rightarrow 0 ~~(a.s.)$. Unfortunately, using the learning rule $f^\epsilon_{\boldsymbol{\cdot}}$ only ensures $\Lcal_\Xbb^{(\Ycal,\ell)}(f^\epsilon_{\boldsymbol{\cdot}},f^*)\leq \epsilon$ almost surely. Thus, the final learning rule will use the learning rules $f^{\epsilon_k}_{\boldsymbol{\cdot}}$ for a sequence of $\epsilon_k$ decreasing to $0$ e.g. $\epsilon_k=2^{-k}$. Intuitively, each learning rule $f^{\epsilon_k}_{\boldsymbol{\cdot}}$ with prediction $y^i$ effectively predicts that the output $y_t$ belongs to the set $\Bcal_i^{\epsilon_k}:= B_{\ell}(y^i,\epsilon_k)\setminus \bigcup_{1\leq j<i} B_{\ell}(y^j,\epsilon_k)$ where we used the notation $B_\ell(y,\epsilon) = \{y'\in \Ycal, \ell(y,y')<\epsilon\}$ for the ``ball" induced by the loss $\ell$. We now consider the learning rule on $(\Ycal,\ell)$ denoted $\hat f_{\boldsymbol{\cdot}}^{(\Ycal,\ell)}$ which successively checks consistency of these set predictions $f^{\epsilon_1}_{\boldsymbol{\cdot}},f^{\epsilon_2}_{\boldsymbol{\cdot}}$ etc. and outputs a point $\hat y\in\Ycal$ close to the consistent intersection of these sets. Formally, 
\begin{equation*}
    \hat f_t^{(\Ycal,\ell)}(x_{\leq t},y_{<t},x_t) = f^{\epsilon_{\hat p}}_t(x_{\leq t},y_{<t},x_t) \text{ for }\hat p=\max\left \{1\leq  p\leq t, \bigcap_{1\leq k\leq p} \Bcal_{f^{\epsilon_k}_t(x_{\leq t},y_{<t},x_t)}^{\epsilon_k} \neq \emptyset \right\}.
\end{equation*}
In this definition, the upper bound $\hat p\leq t$ is put for simplicity only to ensure that there is a finite maximum. We can now show that this learning rule is universally consistent.

Let $k\geq 1$. Note that if the predictions at step $t\geq k$ of $f^{\epsilon_l}_t$ were correct for all $1\leq l\leq k$, then the true output $y_t$ belongs to each set prediction $y_t\in \bigcap_{1\leq l\leq k} \Bcal_{f^{\epsilon_l}_t(x_{\leq t},y_{<t},x_t)}^{\epsilon_l}$, thus $\hat p\geq k$. Now let any $\bar y\in \bigcap_{1\leq l\leq \hat p} \Bcal_{f^{\epsilon_l}_t(x_{\leq t},y_{<t},x_t)}^{\epsilon_l}$, by relaxed triangle inequality we would have
\begin{equation*}
    \ell(\hat f_t^{(\Ycal,\ell)}(x_{\leq t},y_{<t},x_t),y_t)\leq c_\ell ( \ell(\hat f_t^{(\Ycal,\ell)}(x_{\leq t},y_{<t},x_t),\bar y) + \ell(y_t,\bar y)) \leq c_\ell(\epsilon_{\hat p} + \epsilon_k) \leq 2c_\ell \epsilon_k.
\end{equation*}
Hence,
\begin{equation*}
    \ell(\hat f_t^{(\Ycal,\ell)}(x_{\leq t},y_{<t},x_t),y_t) \leq 2c_\ell \epsilon_k  + \bar \ell \cdot \sum_{l=1}^k \ell_{01}(f^\Nbb_t(x_{\leq t},h_{\epsilon_k}(y)_{<t},x_t),h_{\epsilon_k}(y)_t),
\end{equation*}
and for any measurable function $f^*:\Xcal\to\Ycal$, we have $   \Lcal_\Xbb^{(\Ycal,\ell)}(f^{(\Ycal,\ell)}_{\boldsymbol{\cdot}},f^*)\leq 2c_\ell \epsilon_k\; (a.s.).$ By union bound, almost surely this holds for any $k\geq 1$ simultaneously. Therefore, almost surely $\Lcal_\Xbb^{(\Ycal,\ell)}(f^{(\Ycal,\ell)}_{\boldsymbol{\cdot}},f^*)=0$ and the learning rule $f^{(\Ycal,\ell)}_{\boldsymbol{\cdot}}$ is universally consistent.
\end{proof}
The results of \cite{hanneke2021learning} offer more details that are not required in the rest of the paper but can be found in Appendix \ref{appendix:alternative_reduction}.

\subsection{Open problem 3}
For any near-metric space $(\Ycal,\ell)$, the inclusions $\suol_{(\Nbb,\ell_{01})}\subset\suol_{(\Ycal,\ell)}\subset\suol_{(\{0,1\},\ell_{01})}$ given in Proposition \ref{prop:binary_izi} and Theorem \ref{thm:reduction_countable} do not answer whether $\suol_{(\Ycal,\ell_{01})}$ is invariant to the setup when the loss is bounded. The remaining question is whether $\textnormal\suol_{(\{0,1\},\ell_{01})} \subset \textnormal\suol_{(\Nbb,\ell_{01})}$ holds or not. We answer positively to this question in the next section, thereby providing a solution to the following open problem. 

\paragraph{Open Problem 3 (\cite{hanneke2021learning}):} \textit{Is the set $\textnormal\suol$ invariant to the specification of $(\Ycal,\ell)$, subject to $(\Ycal,\ell)$ being separable with $0<\bar\ell<\infty$?}\\

\noindent
\paragraph{Remarks on Open Problem 3.} In words, the open problem asks whether any universal learning task is achievable whenever universal binary classification is possible. In order to answer affirmatively it would suffice to show that the countable classification setting can be reduced to the binary classification setting. Given a process $\Xbb$ admitting universal learning for binary classification and a countable classification task $f^*:\Xcal\to\Nbb$, a natural idea would be to solve separately each of the binary classification tasks $f^{*,i}(\cdot) = \1(f^*(\cdot)=i)$ for $i\in \Nbb$ and to merge the results together. This proof technique works when $f^*$ takes only a finite number of values, giving rise to the following lemma. Its proof can be found in Appendix \ref{ap:lemma_finite_classification}.
\begin{lemma}
\label{lemma:finite_classification}
For any $k\geq 2$, $\textnormal\suol_{([k],\ell_{01})}= \textnormal\suol_{(\{0,1\},\ell_{01})}$.
\end{lemma}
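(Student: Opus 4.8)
The plan is to prove the two inclusions separately. The inclusion $\suol_{([k],\ell_{01})}\subset\suol_{(\{0,1\},\ell_{01})}$ is immediate from Proposition \ref{prop:binary_izi} applied with $(\Ycal,\ell)=([k],\ell_{01})$, which is a separable near-metric space with $\bar\ell=1\in(0,\infty)$. So the substance is the reverse inclusion $\suol_{(\{0,1\},\ell_{01})}\subset\suol_{([k],\ell_{01})}$: given a process $\Xbb$ that admits strong universal online learning for binary classification, I must build a single learning rule that is consistent for every $f^*:\Xcal\to[k]$.

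The natural approach, as the excerpt itself hints, is a ``one-versus-rest'' decomposition. Fix $\Xbb\in\suol_{(\{0,1\},\ell_{01})}$ and let $g_{\boldsymbol{\cdot}}$ be a binary learning rule that is consistent for every measurable binary target under $\Xbb$. For a target $f^*:\Xcal\to[k]$, define the $k$ binary targets $f^{*,i}=\1(f^*(\cdot)=i)$ for $i\in[k]$. Run $k$ independent copies of $g_{\boldsymbol{\cdot}}$, the $i$-th copy on the relabeled data $(\Xbb_{<t},(f^{*,i}(X_s))_{s<t})$, producing at time $t$ a bit $\hat b_t^i\in\{0,1\}$. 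Since each $f^{*,i}$ is measurable, each copy is consistent, so $\frac1T\sum_{t\le T}\1(\hat b_t^i\ne f^{*,i}(X_t))\to 0$ a.s., and by a union bound over the finitely many $i\in[k]$ this holds for all $i$ simultaneously a.s. Then define the merged prediction $\hat Y_t$: if exactly one index $i$ has $\hat b_t^i=1$, predict that $i$; otherwise (zero or $\ge 2$ indices claim the point) predict arbitrarily, say $1$. The key counting step is that on any round $t$ where $\hat Y_t\ne f^*(X_t)$, at least one of the $k$ binary predictors must have erred: either the predictor for the true class $j=f^*(X_t)$ output $\hat b_t^j=0$ (a mistake since $f^{*,j}(X_t)=1$), or it output $\hat b_t^j=1$ but so did some other $i\ne j$, and that $i$ made a mistake since $f^{*,i}(X_t)=0$. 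Hence
\[
    \frac1T\sum_{t=1}^T \ell_{01}(\hat Y_t, f^*(X_t)) \le \sum_{i=1}^k \frac1T\sum_{t=1}^T \1(\hat b_t^i \ne f^{*,i}(X_t)) \xrightarrow[T\to\infty]{} 0 \quad (a.s.),
\]
which gives consistency of the merged rule for $f^*$, and since $f^*$ was arbitrary, $\Xbb\in\suol_{([k],\ell_{01})}$.

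I do not anticipate a genuine obstacle here — the argument is essentially the finite-union-bound trick — but the one place to be careful is the subtlety that makes this work for finite $k$ and fail for countable label sets: the union bound over $i$ requires finitely many indices, and more importantly the per-round mistake bound charges each merged error to a sum over \emph{all} $k$ binary predictors, so the resulting bound degrades linearly in $k$; with countably many labels neither the union bound (still fine a.s., actually) nor — crucially — the per-round charging argument survives, because a single merged error need not localize to a bounded number of coordinates and the relabeled targets $f^{*,i}$ for infinitely many $i$ cannot all be handled by independent runs in a way that keeps the aggregate error sublinear. It is worth stating explicitly in the proof that measurability of each $f^{*,i}$ follows from measurability of $f^*$ together with measurability of singletons $\{i\}\subset[k]$, so that each binary run is indeed covered by the universal consistency guarantee for $\Xbb$.
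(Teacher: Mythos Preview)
Your proposal is correct and follows essentially the same approach as the paper: one direction via Proposition~\ref{prop:binary_izi}, and the reverse via a one-versus-rest decomposition into $k$ binary tasks $f^{*,i}=\1(f^*(\cdot)=i)$, with the per-round observation that a mistake of the merged predictor forces a mistake of at least one binary copy, yielding $\Lcal_{\Xbb}^{([k],\ell_{01})}(\hat f_{\boldsymbol{\cdot}},f^*)\leq\sum_{i=1}^k\Lcal_{\Xbb}^{(\{0,1\},\ell_{01})}(g_{\boldsymbol{\cdot}},f^{*,i})$. The only cosmetic difference is your ``exactly one bit on, else default'' aggregation versus the paper's $\argmax$ rule, but both satisfy the same error-charging inequality.
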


Unfortunately, the proof technique used to show that finitely-many classification reduces to binary classification does not extend to countably-many classification. Indeed, the rate of convergence of the average loss on the tasks $f^{*,i}(\cdot)=\1(f^*(\cdot)=i)$ is not uniform across $i\in\Nbb$. Thus, although we can wait for the convergence of a fixed number of these predictors---say the predictions for functions $f^{*,1},\ldots,f^{*,k}$---we do not have any guarantee on the average losses of the predictions for the next functions $f^{*,i}$ for $i>k$. Essentially, we can only guarantee low average loss for a finite number of predictors which use binary classification.

Our proof differs substantially from this approach by considering instead a very large set of predictors---uncountably many. However, we introduce a probability distribution on these predictors, which allows to have guarantees on the average loss for the predictor with high probability on both the stochastic process $\Xbb$ and the predictor. More precisely, instead of learning the individual label $i$, $f^{*,i}(\cdot)=\1(f^*(\cdot)=i)$, we use predictors of sets of labels $\sigma\in \Pcal(\Nbb)$ as follows: $f_\sigma^*(\cdot) = \1(f^*(\cdot)\in \sigma)$. We can now introduce a uniform distribution for the variable $\sigma$ and test the hypothesis $f^*(x_t)=i$ by analysing the probability (in $\sigma$) of the prediction for $f_\sigma^*$ to be consistent with this hypothesis i.e. $f_\sigma^*(x_t)=1$ if $f^*(x_t)\in \sigma$ and  $f_\sigma^*(x_t)=0$ if $f^*(x_t)\notin \sigma$. Intuitively, for the right hypothesis $i^* = y_t$, this probability will be close to $1$, while for a wrong hypothesis $i^* \neq y_t$ consistency either results from errors in the predictors, or that both $i,i^*\in \sigma$ or both $i,i^*\notin \sigma$ which happens with probability $1/2$. This discrepancy in probability will allow to discriminate which is the true hypothesis with sublinear number of mistakes.

\section{Reduction of countable classification to binary classification}\label{sec: result reduction binary}

We now present the proof of the main technical result of this paper.
\begin{theorem}
\label{thm:binary_countable}
$\normalfont \suol_{(\{0,1\},\ell_{01})} \subset \suol_{(\Nbb,\ell_{01})}$.
\end{theorem}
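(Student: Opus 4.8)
The plan is to solve, in parallel, the binary tasks ``is the true label in $\sigma$?'' for $\sigma$ ranging over all subsets of $\Nbb$, and to put the product (``uniform'') distribution on $\sigma$ so that the true label can be recovered by a probability vote, along the lines of the Remarks above. Fix $\Xbb\in\suol_{(\{0,1\},\ell_{01})}$ and a universally consistent binary learning rule $g_{\boldsymbol{\cdot}}$ for it, and let $\mu$ be the product measure on $\Pcal(\Nbb)\cong\{0,1\}^{\Nbb}$, so that under $\sigma\sim\mu$ the family $(\1(v\in\sigma))_{v\in\Nbb}$ consists of i.i.d.\ fair coins. For a measurable target $f^*:\Xcal\to\Nbb$ and $\sigma\in\Pcal(\Nbb)$, set $f^*_\sigma:=\1(f^*(\cdot)\in\sigma)$, a legitimate binary target, and for a fixed history abbreviate $\hat g_\sigma^{(t)}:=g_t(\Xbb_{<t},(\1(Y_s\in\sigma))_{s<t},X_t)\in\{0,1\}$. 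Since $\Lcal_\Xbb(g_{\boldsymbol{\cdot}},f^*_\sigma)=0$ almost surely for every fixed $\sigma$, a routine joint-measurability and Fubini argument together with bounded convergence in $\sigma$ shows that, on $\Pbb_\Xbb$-almost every realization, the $\sigma$-averaged cumulative error
\[
\bar E_T \;:=\; \Ebb_{\sigma\sim\mu}\!\left[\frac1T\sum_{t=1}^T\1\!\big(\hat g_\sigma^{(t)}\neq f^*_\sigma(X_t)\big)\right]
\]
satisfies $\bar E_T\to 0$ as $T\to\infty$.

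The learning rule for $(\Nbb,\ell_{01})$ is then the following. At time $t$, let $V_{t-1}$ be the finite set of labels appearing in $\Ybb_{<t}$; since $\hat g_\sigma^{(t)}$ depends on $\sigma$ only through $\sigma\cap V_{t-1}$, the ``consistency probability'' of a candidate label $i$,
\[
p_t(i)\;:=\;\Pbb_{\sigma\sim\mu}\!\big[\,\hat g_\sigma^{(t)}=\1(i\in\sigma)\,\big],
\]
is an explicit finite average over the $2^{|V_{t-1}|}$ subsets of $V_{t-1}$; predict $\hat Y_t\in\argmax_{i\in V_{t-1}}p_t(i)$ (ties broken by first appearance, and $\hat Y_t:=1$ when $V_{t-1}=\emptyset$). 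Measurability of $\hat Y_{\boldsymbol{\cdot}}$ reduces to that of $g_{\boldsymbol{\cdot}}$, everything else being a finite computation.

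For the analysis, fix a generic realization (so $\bar E_T\to0$) and a step $t$, put $i^*:=f^*(X_t)$ and $q_t:=\Pbb_\sigma[\hat g_\sigma^{(t)}\neq\1(i^*\in\sigma)]$; then $p_t(i^*)=1-q_t$ and $\frac1T\sum_{t\le T}q_t=\bar E_T$. If $i^*\notin V_{t-1}$ then $\1(i^*\in\sigma)$ is a fair coin independent of $\hat g_\sigma^{(t)}$, so $q_t=\tfrac12$. For any $i\neq i^*$, splitting on whether $\1(i\in\sigma)=\1(i^*\in\sigma)$ and using that distinct labels give independent fair coins,
\[
p_t(i)\;\le\;\Pbb_\sigma\!\big[\1(i\in\sigma)=\1(i^*\in\sigma)\big]+\Pbb_\sigma\!\big[\hat g_\sigma^{(t)}\neq\1(i^*\in\sigma)\big]\;=\;\tfrac12+q_t .
\]
Hence whenever $q_t<\tfrac14$ we must have $i^*\in V_{t-1}$ and $p_t(i^*)=1-q_t>\tfrac34>\tfrac12+q_t\ge p_t(i)$ for every other $i\in V_{t-1}$, so the rule outputs $\hat Y_t=i^*$. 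Thus the set of erroneous steps up to horizon $T$ is contained in $\{t\le T:q_t\ge\tfrac14\}$, whose size is at most $4\sum_{t\le T}q_t=4T\bar E_T$ by Markov's inequality; therefore $\frac1T\sum_{t=1}^T\1(\hat Y_t\neq f^*(X_t))\le 4\bar E_T\to0$ almost surely, and since $f^*$ was arbitrary, $\Xbb\in\suol_{(\Nbb,\ell_{01})}$.

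The one genuine obstacle is exactly the obstruction identified in the Remarks above: the label-wise tasks $\1(f^*=i)$ converge at rates that are not uniform in $i$, so they cannot be aggregated directly. Randomizing over $\sigma$ removes this, because the single averaged quantity $\bar E_T$ bundles all the (individually non-uniform) binary guarantees at once, while the $\tfrac12$ ``coin-collision'' bound makes a wrong hypothesis detectably worse than the true one using only that averaged bound. As a by-product, $q_t=\tfrac12$ on every step that introduces a fresh label forces the number of distinct labels observed by time $T$ to be $o(T)$, so restricting the $\argmax$ to $V_{t-1}$ is costless; the remaining points (joint measurability of $(\omega,\sigma)\mapsto E_T$, the Fubini exchange, and bounded convergence in $\sigma$) are routine.
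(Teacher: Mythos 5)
Your proposal is correct and follows essentially the same route as the paper's proof: randomizing over subsets $\sigma\subset\Nbb$ under the product fair-coin measure, learning $f^*_\sigma$ with the binary rule, and discriminating the true label via the consistency probability ($1-q_t$ for the truth versus $\tfrac12+q_t$ for impostors), finishing with a Markov-type count of the steps where $q_t\ge\tfrac14$. The only differences are cosmetic --- you use an $\argmax$ over observed labels instead of a $\tfrac34$ threshold, and you invoke bounded convergence to get $\bar E_T\to0$ directly where the paper unpacks this with an explicit $T_\epsilon$ and a $7/8$-probability event --- which if anything streamlines the bookkeeping.
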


\begin{proof}
Suppose you have a process $\Xbb\in \suol_{\{0,1\}}$. We want to show that there exists some universal learner for the input process $\Xbb$ and the setting $(\Nbb, \ell_{01})$. Denote by $f_{\boldsymbol{\cdot}}:=\{f_t\}_{t=1}^\infty$ the universal learner in the binary classification setting $(\{0,1\},\ell_{01})$ for sequence $\Xbb$ and by $f^*:\Xcal\to\Nbb$ the unknown function to learn. For some subsets of outputs $S\subset \Nbb$ we will consider learning the binary valued function $f^*_S(\cdot) = \1 (f^* (\cdot)\in S)$.

Specifically, we introduce a random set $\sigma \subset \Nbb$ defined on the product topology of independent Bernoullis. Let $(B_j)_{j\geq 0}$ a sequence of i.i.d. Bernouilli $\Bcal(1/2)$, we define $\sigma = \{j\geq 1:B_j=1\}$. Based on learning the functions $f^*_\sigma$ we now define a statistical test which we will use to define a learning rule for the countable classification. Precisely, given a time $t\geq 0$, define for all $i\in \Nbb$,
\begin{equation*}
    p^t(x_{<t},y_{<t},x_t;i) := \frac{\Pbb_{\sigma}\left[f_t(x_{<t},\1(y\in \sigma)_{<t},x_t) = 1 \mid  i\in \sigma \right] + \Pbb_{\sigma}\left[f_t(x_{<t},\1(y\in \sigma)_{<t},x_t) = 0 \mid  i\notin\sigma \right]}{2}.
\end{equation*}
where we slightly abuse notations and write $\1(y\in \sigma)$ to denote $(\1(y_t\in \sigma))_{t\geq 1}$. Intuitively, $p^t(\Xbb_{<t},\Ybb_{<t},X_t;i)$ gives the proportion of subsets $\sigma$ for which the hypothesis $f^*(X_t)=i$ would be consistent with the prediction on the model trained to predict $f^*_\sigma(X_t)$. We first note that although the definition of $p^t(x_{<t},y_{<t},x_t;i)$ involves computing expectations over the product measure for $\sigma$, its computation can be made practical by considering the values of $B_j$ for observed values $j$, i.e. $j\in  \{y_{t'}:t'<t\}:=\Ycal$. Indeed, we can conveniently write $p^t(x_{<t},y_{<t},x_t;i)$ as
\begin{multline*}
    p^t(x_{<t},y_{<t},x_t;i) = \frac{1}{2^{|\Ycal|}}\sum_{(b_j)_{j\in\Ycal}\in\{0,1\}^{|\Ycal|}} \Pbb[f_t(x_{<t},(b_{y_{t'}})_{t'<t},x_t) = 1]\1(b_i=1) \\
    + \Pbb[f_t(x_{<t},(b_{y_{t'}})_{t'<t},x_t) = 0]\1(b_i=0),
\end{multline*}
where the probability is taken on the possible randomness of the learning rule only. As a result, the function $p^t(\cdot,\cdot,\cdot;\cdot)$ can be practically computed and is also measurable.

Note that if the learning rule $f_{\boldsymbol{\cdot}}$ had no errors we would have a simple discrimination as follows
\begin{equation*}
    \frac{\Pbb_{\sigma}\left[f^*_\sigma(X_t) = 1 \mid  i\in \sigma \right] + \Pbb_{\sigma}\left[f^*_\sigma(X_t) = 0 \mid  i\notin\sigma \right]}{2} = \begin{cases}
    1 &\text{if } f^*(X_t)=i,\\
    1/2 &\text{otherwise}.
    \end{cases}
\end{equation*}
We are now ready to define a learning rule $\hat f:=\{\hat f_t\}_{t=1}^\infty$ for countable classification as follows
\begin{equation*}
    \hat f_t(x_{<t},y_{<t},x_t):= \begin{cases}
    \displaystyle{ \min_{i\in \Nbb}} \left\{i:\; p^t(x_{<t},y_{<t},x_t;i)>\frac{3}{4} \right\} & \text{if }\exists i\in \Nbb,\; p^t(x_{<t},y_{<t},x_t;i)>\frac{3}{4} \\
    0 & \text{otherwise}.
    \end{cases}
\end{equation*}
This is a valid measurable learning rule as a result of the measurability of $p^t(\cdot,\cdot,\cdot;\cdot)$ for all $t\geq 1$.
We now show that the learning rule $\hat f$ is universally consistent. By hypothesis of binary classification universal consistency, for any subset $S\in  \Pcal(\Nbb)$, we have $\Pbb_\Xbb[\Lcal_\Xbb (f_{\boldsymbol{\cdot}},f^*_S;T) \xrightarrow[T ]{}  0]=1.$ Because this result is true for any subset $S$, we get
\begin{equation*}
    \Pbb_{\Xbb,\sigma}\left[\Lcal_\Xbb (f_{\boldsymbol{\cdot}},f^*_\sigma;T) \xrightarrow[T\to\infty ]{}  0\right]=1
\end{equation*}
where the randomness is taken on both $\Xbb$ and $\sigma$ -- and potentially the learning process $f_{\boldsymbol{\cdot}}$. Therefore, we have 
\begin{equation*}
    \Pbb_{\sigma}\left[\Lcal_\Xbb (f_{\boldsymbol{\cdot}},f^*_\sigma;T) \xrightarrow[T\to\infty ]{}  0\right]=1,\quad \text{a.s. in }\Xbb
\end{equation*}
Denote by $\Ecal$ this event of probability $1$. We will show that on this event, the learning rule is consistent. We now fix an input trajectory $\Xbb$ falling in $\Ecal$ which we denote by $x=(x_t)_{t=0}^\infty$ to make clear that there is no randomness on the trajectory anymore -- one can think of a deterministic process. We additionally denote $y=(y_t)_{t=0}^\infty:=(f^*(x_t))_{t=0}^\infty$ for simplicity.

By construction, for any $\epsilon>0$ we have 
\begin{equation*}
    \Pbb_{\sigma}\left[\Lcal_x (f_{\boldsymbol{\cdot}},f_\sigma^*;t) \leq \epsilon,\quad \forall t\geq T\right] \xrightarrow[T \to \infty]{} 1
\end{equation*}
We can then define for any $\epsilon$ a time $T_{\epsilon}\geq 0$ such that

\begin{equation*}
    \Pbb_{\sigma} \left[\Lcal_x (f_{\boldsymbol{\cdot}},f^*_\sigma;t) \leq \epsilon,\quad \forall t\geq T_{\epsilon}\right] \geq \frac{7}{8}.
\end{equation*}
We define the event $\Acal_{\epsilon}=\{\Lcal_x (f_{\boldsymbol{\cdot}},f^*_\sigma;t) \leq \epsilon,\quad \forall t\geq T_{\epsilon}\}$. An important remark is that both $T_\epsilon$ and the event $\Acal_\epsilon$ are dependent on the specific trajectory $x$: the learning rate of our rule depends on the realization of the input trajectory. We will show that from time $T_{\epsilon}$, the error rate of $\hat f$ is at most $8\epsilon$. Let $t\geq 0$ and $i^*_t=f^*(x_t)$ be the true (random) value that we want to predict. We have for the true value $i_t^*$,
\begin{align*}
    p^t(x_{<t},y_{<t},x_t;i^*_t) & = 1-\frac{\Pbb_{\sigma}\left[f_t(x_{<t},f^*_\sigma(x_{<t}),x_t) = 0 \mid  i_t^*\in \sigma \right] + \Pbb_{\sigma}\left[f_t(x_{<t},f^*_\sigma(x_{<t}),x_t) = 1 \mid  i_t^*\notin\sigma \right]}{2}\\
    &\geq 1- \Pbb_\sigma[\bar\Acal_{\epsilon}] - \Ebb_{\sigma}\left[\left(\1_{f_t(x_{<t},f^*_\sigma(x_{<t}),x_t) = 0 ,  i_t^*\in \sigma} + \1_{f_t(x_{<t},f^*_\sigma(x_{<t}),x_t) = 1,i_t^*\notin\sigma}\right)  \1_{\Acal_{\epsilon}} \right]\\
    &\geq 1- \frac{1}{8} - \Ebb_{\sigma}\left[\ell(f_t(x_{< t},f^*_\sigma(x_{<t}), x_t), f^*_\sigma(x_t))\1_{\Acal_{\epsilon}}\right]
\end{align*}
However, for any $i\neq i^*_t$,
\begin{align*}
    p^t(x_{<t},y_{<t},x_t;i) & = \frac{\Pbb_{\sigma}\left[f_t(x_{<t},f^*_\sigma(x_{<t}),x_t) = 1 \mid  i\in \sigma \right] + \Pbb_{\sigma}\left[f_t(x_{<t},f^*_\sigma(x_{<t}),x_t) = 0 \mid  i\notin\sigma \right]}{2}\\
    &\leq \frac{1}{2} + \Ebb_\sigma \left[\1_{f_t(x_{<t},f^*_\sigma(x_{<t}),x_t) = 1,i\in \sigma,i^*_t\notin \sigma } + \1_{f_t(x_{<t},f^*_\sigma(x_{<t}),x_t) = 0, i\notin\sigma,i^*_t\in \sigma}\right]\\
    &\leq \frac{1}{2} + \Pbb_\sigma[\bar\Acal_{\epsilon }] + \Ebb_{\sigma}\left[(\1_{f_t(x_{<t},f^*_\sigma(x_{<t}),x_t) = 1 ,  i\in \sigma,i^*_t\notin\sigma} + \1_{f_t(x_{<t},f^*_\sigma(x_{<t}),x_t) = 0,i\notin\sigma,i^*_t\in\sigma})\1_{\Acal_{\epsilon }}  \right]\\
    &\leq \frac{1}{2}+ \frac{1}{8} + \Ebb_{\sigma}\left[\ell(f_t(x_{< t},f^*_\sigma(x_{<t}), x_t), f^*_\sigma(x_t))\1_{\Acal_{\epsilon }}\right]
\end{align*}
Note that the term $e_t := \Ebb_{\sigma}\left[\ell(f_t(x_{< t},f^*_\sigma(x_{<t}), x_t), f^*_\sigma(x_t))\1_{\Acal_{\epsilon }}\right]$ is a simple scalar. Therefore, by the previous estimates on $p^t$, whenever $e_t < \frac{1}{8}$, the learning rule classifies the new input point correctly: $\1_{\hat f_t(x_{<t},y_{<t},x_t)\neq i_t^*} \leq \1_{e_t\geq\frac{1}{8}}.$ We will now show that the bad event $e_t \geq \frac{1}{8}$ only happens with sublinear rate in $t$. By construction, in $\Acal_{\epsilon }$, for any $t\geq T_{\epsilon }$,
 \begin{equation*}
     \frac{1}{t}\sum_{u=1}^{t}\ell(f_t(x_{< t},f^*_\sigma(x_{<t}), x_t), f^*_\sigma(x_t)) \leq \epsilon.
 \end{equation*}
Therefore, for any $t\geq T_{\epsilon }$, we have 
\begin{equation*}
    \frac{1}{t}\sum_{u=1}^T e_u= \frac{1}{t}\sum_{u=1}^{t}\Ebb_{\sigma}\left[\ell(f_t(x_{< t},f^*_\sigma(x_{<t}), x_t), f^*_\sigma(x_t)) \1_{\Acal_{\epsilon }}\right] \leq \epsilon.
\end{equation*}
The loss of our learning rule on trajectory $x$ now satisfies for all $t\geq T_{\epsilon }$,
\begin{equation*}
    \Lcal_x (\hat f_{\boldsymbol{\cdot}},f^*;t) = \frac{1}{t}\sum_{u=1}^t \1_{\hat f_u(x_{<u},y_{<u},x_u)\neq i_u^*} \leq \frac{1}{t}\sum_{u=1}^t  \1_{e_u\geq\frac{1}{8}} \leq  \frac{8}{t} \sum_{u=1}^t e_u \leq 8 \epsilon.
\end{equation*}
Thus, $\Lcal_x(\hat f_{\boldsymbol{\cdot}},f^*)\leq 8\epsilon.$ Taking $\epsilon>0$ arbitrarily small shows that $\Lcal_x(\hat f_{\boldsymbol{\cdot}},f^*)=0$ and hence, the learning rule is consistent on trajectory $x$. Therefore, $\hat f_{\boldsymbol{\cdot}}$ is consistent on the event $\Ecal$ for the input sequence $\Xbb$, which has probability $1$. To summarize, $\Lcal_\Xbb(\hat f_{\boldsymbol{\cdot}},f^*)=0\; ( a.s.)$ for any measurable function $f^*$, showing that $\hat f_{\boldsymbol{\cdot}}$ is universally consistent and thus $\Xbb\in \suol_\Nbb$. This ends the proof of the theorem.
\end{proof}

Together with Theorem \ref{thm:reduction_countable}, this theorem ends the proof of the main result Theorem \ref{thm:invariance_suol}. Theorem \ref{thm:invariance_optimistic} is also a direct consequence from the proof of Theorem \ref{thm:binary_countable}, Theorem \ref{thm:reduction_countable} and Proposition \ref{prop:binary_izi} since the learning rules were all constructed independently from the stochastic process $\Xbb$. The complete proof is given in Appendix \ref{appendix:invariance_optimistic} .

\subsection{Learning rules preserved by the reduction}
Though its definition is little abstruse, the countable classification learning rule that is derived from the proof of Theorem \ref{thm:reduction_countable} leaves many learning rules unchanged. In particular, the following proposition shows that learning rules based on a representant which depends only on the historical input sequence e.g. nearest neighbor rule, are transported by our construction.
\begin{proposition}\label{prop:rule_preserved}
Let $\{f_t\}_{t=1}^\infty$ be a learning rule defined by 
representant function $\phi(t)\in \{1,...,t-1\}$ which at step $t$ only depends on $(x_1,..,x_t)$ as follows,
\begin{equation*}
    f_t(x_{<t},y_{<t},x_t) = y_{\phi(t)}.
\end{equation*}
Note that this learning rule can be defined for any output setting $(\Ycal,\ell)$. If $\{f_t\}_{t=1}^\infty$ is universally consistent on a process $\Xbb$ for binary classification, it is also universally consistent on $\Xbb$ for any separable near-metric setting $(\Ycal,\ell)$ with bounded loss.
\end{proposition}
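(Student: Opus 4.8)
The plan is to show that the construction behind Theorem~\ref{thm:binary_countable} leaves a representant rule untouched, and then to bootstrap from countable classification to an arbitrary bounded setting by a discretization argument much simpler than the general one of Theorem~\ref{thm:reduction_countable}. First I would run $\{f_t\}$ through the reduction of Theorem~\ref{thm:binary_countable}. Since $f_t(x_{<t},y_{<t},x_t)=y_{\phi(t)}$ only reads off coordinate $\phi(t)$ and $\phi(t)$ does not depend on the label sequence, we get $f_t(x_{<t},\1(y\in\sigma)_{<t},x_t)=\1(y_{\phi(t)}\in\sigma)$, so the statistical test becomes
\begin{equation*}
    p^t(x_{<t},y_{<t},x_t;i)=\frac{\Pbb_{\sigma}\left[y_{\phi(t)}\in\sigma \mid i\in\sigma\right]+\Pbb_{\sigma}\left[y_{\phi(t)}\notin\sigma \mid i\notin\sigma\right]}{2}.
\end{equation*}
Because $\sigma$ is built from i.i.d. Bernoulli$(1/2)$, the events $\{i\in\sigma\}$ and $\{y_{\phi(t)}\in\sigma\}$ are independent whenever $i\neq y_{\phi(t)}$ and coincide when $i=y_{\phi(t)}$; hence $p^t(x_{<t},y_{<t},x_t;i)=1$ if $i=y_{\phi(t)}$ and $p^t(x_{<t},y_{<t},x_t;i)=\tfrac12$ otherwise. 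Consequently $\hat f_t(x_{<t},y_{<t},x_t)=\min\{i:\,p^t(x_{<t},y_{<t},x_t;i)>\tfrac34\}=y_{\phi(t)}$, i.e. the reduction reproduces exactly $\{f_t\}$ viewed as an $\Nbb$-valued rule. By the proof of Theorem~\ref{thm:binary_countable}, $\hat f$ is universally consistent on $\Xbb$ for $(\Nbb,\ell_{01})$, so $\{f_t\}$ is universally consistent on $\Xbb$ for countable classification.

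Next I would pass from countable classification to a general separable near-metric setting $(\Ycal,\ell)$ with $\bar\ell<\infty$. Fix a dense sequence $(y^i)_{i\geq1}$ in $\Ycal$ and, as in the proof of Theorem~\ref{thm:reduction_countable}, the measurable map $h_\epsilon:y\mapsto\inf\{i\in\Nbb:\ell(y^i,y)<\epsilon\}$. Fix a measurable target $f^*:\Xcal\to\Ycal$. For each $\epsilon>0$ the map $h_\epsilon\circ f^*:\Xcal\to\Nbb$ is measurable, so by the previous step almost surely $\Lcal_\Xbb^{(\Nbb,\ell_{01})}(f_{\boldsymbol{\cdot}},h_\epsilon\circ f^*)=0$. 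Running $\{f_t\}$ on the $\Ycal$-valued task produces the prediction $f^*(X_{\phi(t)})$ at step $t$; whenever $h_\epsilon(f^*(X_{\phi(t)}))=h_\epsilon(f^*(X_t))=i$, both $f^*(X_{\phi(t)})$ and $f^*(X_t)$ lie within $\epsilon$ of $y^i$, so the relaxed triangle inequality gives $\ell(f^*(X_{\phi(t)}),f^*(X_t))\leq 2c_\ell\epsilon$, and otherwise the loss is at most $\bar\ell$. Summing over $t\leq T$,
\begin{equation*}
    \Lcal_\Xbb^{(\Ycal,\ell)}(f_{\boldsymbol{\cdot}},f^*;T)\leq 2c_\ell\epsilon+\bar\ell\cdot\Lcal_\Xbb^{(\Nbb,\ell_{01})}(f_{\boldsymbol{\cdot}},h_\epsilon\circ f^*;T),
\end{equation*}
and letting $T\to\infty$ yields $\Lcal_\Xbb^{(\Ycal,\ell)}(f_{\boldsymbol{\cdot}},f^*)\leq 2c_\ell\epsilon$ almost surely. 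Taking $\epsilon=2^{-k}$ and a union bound over $k\geq1$ gives $\Lcal_\Xbb^{(\Ycal,\ell)}(f_{\boldsymbol{\cdot}},f^*)=0$ almost surely, which is the claim.

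The main obstacle, and really the only nontrivial point, is the collapse $p^t\in\{\tfrac12,1\}$ in the first step: it is exactly where the independence of the Bernoulli coordinates of $\sigma$ does the work, and it is what makes the abstract reduction degenerate to the plain representant rule. Once this is in place, the second step is a streamlined rerun of the discretization already present in the proof of Theorem~\ref{thm:reduction_countable}. The one routine thing to double-check along the way is measurability of $h_\epsilon$ (and hence of $h_\epsilon\circ f^*$), which is handled exactly as there; note also that boundedness of $\ell$ enters only through the $\bar\ell$ bound on the "disagreement" steps in the second step.
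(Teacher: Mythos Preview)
Your proposal is correct and follows essentially the same route as the paper: first verifying that the $p^t$ test of Theorem~\ref{thm:binary_countable} collapses to $\{1/2,1\}$ for a representant rule (so the reduction outputs $y_{\phi(t)}$ again), and then using the $h_\epsilon$ discretization together with the bound $\ell(y_{\phi(t)},y_t)\leq 2c_\ell\epsilon+\bar\ell\cdot\ell_{01}(h_\epsilon(y_{\phi(t)}),h_\epsilon(y_t))$ and a union bound over $\epsilon_k=2^{-k}$. The only cosmetic difference is that you make the independence of the Bernoulli coordinates of $\sigma$ explicit in the first step, which the paper leaves implicit.
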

\label{prop:preserved_learning_rules}
\begin{proof}
We first show that the learning rule $f_{\boldsymbol{\cdot}} = \{f_t\}_{t=1}^\infty$ is transported by our construction in Theorem \ref{thm:binary_countable} for classification with countable number of classes. In the rest of the proof, we will denote by $\phi(\cdot)$ the representant function of $f_{\boldsymbol{\cdot}}$. With
\begin{multline*}
    p^t(x_{<t},y_{<t},x_t;i) :=\frac{1}{2}\left( \Pbb_{\sigma}\left[f_t(x_{<t},\1(y\in \sigma)_{<t},x_t) = 1 \mid  i\in \sigma \right] \right.\\+ \left.\Pbb_{\sigma}\left[f_t(x_{<t},\1(y\in \sigma)_{<t},x_t) = 0 \mid  i\notin\sigma \right]\right),
\end{multline*}
we define our learning rule $f^\Nbb_{\boldsymbol{\cdot}}:=\{ f_t^\Nbb\}_{t=1}^\infty$ for countably-many classification as in Theorem \ref{thm:binary_countable}:
\begin{equation*}
    f_t^\Nbb(x_{<t},y_{<t},x_t):= \begin{cases}
    \displaystyle{ \min_{i\in \Nbb}} \left\{i,\; p^t(x_{<t},y_{<t},x_t;i)>\frac{3}{4} \right\} & \text{if }\exists i\in \Nbb,\; p^t(x_{<t},y_{<t},x_t;i)>\frac{3}{4} \\
    0 & \text{otherwise}.
    \end{cases}
\end{equation*}
We now show that $f^\Nbb_{\boldsymbol{\cdot}}$ is in fact defined with a similar representant function. Indeed, 
\begin{align*}
    p^t(x_{<t},y_{<t},x_t;i) &= \frac{\Pbb_{\sigma}\left[\1(y_{\phi(t)}\in \sigma) = 1 \mid  i\in \sigma \right] + \Pbb_{\sigma}\left[\1(y_{\phi(t)}\in \sigma) = 0 \mid  i\notin\sigma \right]}{2} \\
    &=  \begin{cases}
    1 \text{ if } y_{\phi(t)} = i\\
    \frac{1}{2} \text{ if } y_{\phi(t)} \neq i
    \end{cases} 
\end{align*}
Therefore, we obtain $f_t^\Nbb(x_{<t},y_{<t},x_t) = y_{\phi(t)}$, which shows that $f^\Nbb_{\boldsymbol{\cdot}} = f_{\boldsymbol{\cdot}}$ i.e. that the learning rule $f_{\boldsymbol{\cdot}}$ is transported by the construction.

We now fix a separable near-metric space $(\Ycal,\ell)$ and a process $\Xbb$ such that $f_{\boldsymbol{\cdot}}$ is universally consistent for binary classification. By the above arguments, Theorem \ref{thm:binary_countable} shows that $f_{\boldsymbol{\cdot}}$ is also universally consistent for countable classification. We now aim to show that $f_{\boldsymbol{\cdot}}$ on $(\Ycal,\ell)$ is universally consistent on $\Xbb$. Let $f^*$ be a measurable target function and $\epsilon>0$. We take a sequence $(y^i)_{i\geq 1}$ dense on $\Ycal$ with respect to $\ell$ and construct the function $h(y) = \inf\{i\geq 1,\ell(y^i,y)<\epsilon\}$. Then, $y^{f_t(x_{< t},h_k(y_{<t}),x_t)} = y^{h(y_{\phi(t)})}$. Hence, if $f_t(x_{< t},h(y_{<t}),x_t)= h(y_t)$ we obtain $y^{h(y_{\phi(t)})} = y^{h(y_t)}$. Therefore, we can write
\begin{align*}
    \ell(y_{\phi(t)},y_t)&\leq \bar \ell  \cdot \1_{f_t(x_{< t},h(y_{<t}),x_t)\neq h(y_t)} + \ell(y_{\phi(t)},y_t)\1_{f_t(x_{< t},h(y_{<t}),x_t)= h(y_t)}\\
    &\leq \bar  \ell \cdot \1_{f_t(x_{< t},h(y_{<t}),x_t)\neq h(y_t)}  + c_\ell ( \ell(y_{\phi(t)}, y^{h(y_{\phi(t)})})+ \ell(y^{h(y_t)},y_t))\\
    &\leq \bar \ell\cdot \ell_{01}(f_t(x_{< t},h(y_{<t}),x_t), h(y_t)) + 2c_\ell\epsilon.
\end{align*}
This yields $\Lcal_\Xbb(f_{\boldsymbol{\cdot}},f^*;T)\leq \bar \ell \Lcal_\Xbb(f_{\boldsymbol{\cdot}},h\circ f^*;T) + 2c_\ell \epsilon$. Because $f_{\boldsymbol{\cdot}}$ is universally consistent for the setting $(\Nbb,\ell_{01})$, it is in particular consistent for target function $h\circ f^*:\Xcal\to \Nbb$. Therefore, $\limsup_T \Lcal_\Xbb(f_{\boldsymbol{\cdot}},f^*;T)\leq 2c_\ell\epsilon,\; (a.s.)$. This is valid for $\epsilon_k=2^{-k}$ for all $k\geq 1$. Therefore, by union bound, $\Lcal_\Xbb(f_{\boldsymbol{\cdot}},f^*;T)\to 0,\;(a.s.)$, which ends the proof that $f_{\boldsymbol{\cdot}}$ is universally consistent on $\Xbb$ for the setting $(\Ycal,\ell)$.
\end{proof}

\section{Properties of the 1-Nearest Neighbour learning rule}\label{sec:nearest_neighbour}

In this section, we will study some interesting properties of the simple nearest neighbour learning rule in the context of strong universal online learning. Formally, we can define $\nn = \{\nn_{t}\}_{t=1}^\infty$ as follows: for $t>1$,
\begin{equation*}
    \nn_t((x_i)_{i<t},(y_i)_{i<t},x_t) = 
    y_{\phi(t)} \quad \text{where } \phi(t) = \argmin_{1\leq i<t} \rho(x_t,x_i).
\end{equation*}
Ties can be broken arbitrarily, for simplicity we split ties in favor of the most ancient closest input point. We will refer to $x_{\phi(t)}$ as the representant of $x_t$ for the nearest neighbor rule. Proposition \ref{prop:preserved_learning_rules} shows that if nearest neighbor is universally consistent for process $\Xbb$ in the binary classification setting, it is also universally consistent for any bounded separable near-metric setting $(\Ycal,\ell)$. As an immediate consequence of this result, all processes that are i.i.d admit nearest neighbour as an universally consistent learning rule for any near-metric setting. This comes from the universal consistency of nearest neighbour on such processes for binary classification \cite{devroye2013probabilistic}.

This reduction motivates the analysis of the consistency of nearest neighbour for binary classification. In the rest of this section, we will focus on the specific case $\Xcal = \Rbb$ with classical Euclidian distance $\rho$ as metric. The results presented here can be extended to the $d$-dimensional euclidean space $\Rbb^d$. We will show that if the input stream $\Xbb$ is in $\suol$, the nearest neighbour learning rule is at least able to learn functions that represent a finite union of intervals which are in some sense ``simple" functions. 

\begin{theorem}
For any process $\Xbb\in \suol$, the nearest neighbour learning rule is consistent for any finite union of intervals $A=\bigcup_{k=1}^n I_k$ for any arbitrary $n\geq 1$, i.e., for $f^* = \1(\cdot \in A)$ we have $ \Lcal_\Xbb(\nn,f^*,T)\rightarrow0 ~(a.s.).$ 
\end{theorem}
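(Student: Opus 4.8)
The plan is to bound the number of mistakes $\nn$ makes up to time $T$ by $o(T)$ almost surely, by localizing the mistakes near the finitely many endpoints of the intervals and controlling each endpoint separately. Write $C\subset\Rbb$ for the finite set of finite endpoints of $I_1,\dots,I_n$, so $|C|\le 2n$ and $f^*=\1(\cdot\in A)$ is constant on every connected component of $\Rbb\setminus C$. A mistake at time $t$, i.e.\ $f^*(X_{\phi(t)})\ne f^*(X_t)$, therefore forces some $c\in C$ to lie (weakly) between $X_t$ and its representant $X_{\phi(t)}$. It thus suffices to show that for each fixed $c\in C$ the number of times $t\le T$ at which $\{X_{\phi(t)},X_t\}$ straddles $c$ is $o(T)$ a.s., and then to union bound over $C$. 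Mistakes with $X_t=c$ exactly are harmless: once $c$ has been observed, any later $X_t=c$ has a representant at distance $0$ and is classified correctly, so these contribute $O(1)$ per endpoint.

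Fix $c\in C$ and consider the "right-events at $c$", the times $t$ with $X_{\phi(t)}\le c<X_t$ (left-events being symmetric). At such a time no point of $\Xbb_{<t}$ lies in $(c,X_t)$, since it would be strictly closer to $X_t$ than $X_{\phi(t)}$; hence $X_{\phi(t)}$ equals $u:=\max\{X_i:i<t,\ X_i\le c\}$, and writing $v\in(c,+\infty]$ for the smallest point of $\Xbb_{<t}$ exceeding $c$, the nearest-neighbour property gives $X_t-u\le v-X_t$, whence $X_t-c\le\tfrac12\big((v-c)-(c-u)\big)\le\tfrac12(v-c)$. In words, the distance from $c$ to the closest observed point strictly to its right is at least halved at every right-event, and $X_t$ becomes that closest point. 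Enumerating the right-events at $c$ up to time $T$ as $t_1<\dots<t_M$ and setting $\gamma_m:=X_{t_m}-c>0$, we get $\gamma_{m+1}\le\gamma_m/2$, so $X_{t_1},\dots,X_{t_M}$ occupy $M$ pairwise distinct cells of the dyadic family $\{(c+2^{-j-1},\,c+2^{-j}]\}_{j\in\Zbb}$ covering $(c,+\infty)$.

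To turn this into a sublinear bound I use the fact, due to \cite{hanneke2021learning}, that every process admitting strong universal online learning satisfies the sublinear measurable visits condition \smv: for any countable measurable partition $\Pcal$ of $\Rbb$, the number of cells of $\Pcal$ met by $\Xbb_{\le T}$ is $o(T)$ a.s. Completing the dyadic family above to a countable measurable partition $\Pcal_c$ of $\Rbb$ (adding $\{c\}$ and the mirror-image dyadic cells to the left of $c$), \smv applied to $\Pcal_c$ yields that $\Xbb_{\le T}$ meets only $o(T)$ cells; since the $M$ right-event points at $c$ lie in $M$ distinct cells of $\Pcal_c$, we get $M=o(T)$ a.s. The same argument bounds the left-events at $c$, and summing over $c\in C$ gives $\Lcal_\Xbb(\nn,f^*;T)=o(1)$ a.s.

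The step that demands the most care is the geometric analysis at the endpoints together with the degenerate configurations it suppresses: observed points exactly equal to an endpoint $c$ (which can make $c-u=0$, so that the halving is only nonstrict — still enough for the cells to be distinct), the case $X_{\phi(t)}=c$, and the interaction with the tie-breaking convention for $\phi(t)$, which must be checked not to disturb the "no observed point in $(c,X_t)$" argument. Once these are dispatched, the remainder is bookkeeping: the reduction to $O(n)$ endpoints, the halving recursion, and one invocation of \smv per endpoint.
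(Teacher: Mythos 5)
Your proof is correct and rests on essentially the same mechanism as the paper's: reduce the problem to the finitely many interval endpoints, surround each endpoint with a dyadic partition of $\Rbb$, and invoke the $\smv$ condition to bound the number of straddling mistakes by the number of partition cells visited, which is $o(T)$ almost surely. The differences are only organizational --- the paper first proves that the collection of NN-learnable sets is closed under complement and finite union so that it only needs to treat half-lines $(-\infty,a)$, and it injects mistakes into cells via the observation that a mistake forces $x_t$ into a previously unvisited dyadic cell, rather than via your halving recursion on the distance to the endpoint --- but both arguments produce the same injection from mistakes to cells and hence the same conclusion.
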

\noindent The proof of this result comes in two steps. First we show that the collections of set that are consistent with the nearest neighbour learning rule is closed by complement and finite union. Second, we show that this collection contains the intervals. Note that in order to prove universal consistency of the nearest neighbour learning rule, we would need to prove that this collection is closed under \emph{countable} union. This is unfortunately beyond the results of this paper.

To build some intuition on the significance of the result, we provide a simple process (deterministic, yet not in $\suol$) for which nearest neighbor fails on an interval. Let $\Xcal = [-1,1]$, $f^*=\1_{[0,1]}$ and $X_t = (-\frac{1}{3})^t$. Then, the nearest neighbor of $X_t$ is $X_{t-1}$ for all $t\geq 1$, inducing an error at each step. On the other hand, $\suol$ processes do not have this behavior.\\

\begin{proof} We fix a stochastic process $\Xbb \in \suol$. Recall that as a consequence $\Xbb$ satisfies condition $\smv$ (Condition 2 in \cite{hanneke2021learning}). This condition states that $\Xbb$ can only makes a sublinear number of visits of different regions of any measurable partition of $\Xcal$. The condition is formally stated as follows.\\\\
\noindent{\bf Condition \smv}~~ {\it The stochastic process $\Xbb$ satisfies condition $\smv$ i.if for every disjoint sequence $\{A_k\}_{k=1}^{\infty}$ in $\Bcal$ with $\bigcup_{k=1}^{\infty}A_k=\Xcal$ (i.e., every countable measurable partition),}
\begin{equation*}
    \#\{k\in \mathbb N: A_k\cap \mathbb{X}_{<T} \neq \emptyset \} = o(T) \quad (a.s).
\end{equation*}

We define $\Fcal_{\Xbb}$ the collection of measurable sets $A\in \Bcal$ for which the nearest neighbour learning rule is consistent on the associated indicator function $\1(\cdot \in A)$. Formally,
\begin{equation*}
    \Fcal_{\Xbb} = \{ A \in \Bcal \mid \Lcal_{\Xbb}(\nn,\1(\cdot \in A);T)\rightarrow 0,\; (a.s.)\}.
\end{equation*}
Note that $\Fcal_\Xbb$ is stable by complement because the choice of representant in the nearest neighbor rule is independent of the target function: for any measurable set $A$ we have $\Lcal_{\Xbb}(\nn,\1_{\cdot \in A};T) = \Lcal_{\Xbb}(\nn,1-\1_{\cdot \in A};T) = \Lcal_{\Xbb}(\nn,\1_{\cdot \in A^c};T)$.

We now show that $\Fcal_{\Xbb}$ is stable by finite union. Let $A_i\in \Fcal_\Xbb$ for $i=1,\ldots,k$. For simplicity we denote $A:=\bigcup_{i=1}^k A_i$. Again, because the choice of representant is independent from the target function, if nearest neighbor makes a mistake at time $t$ for target function $A$, it makes at also a mistake for at least one of the functions $A_i$, $1\leq i\leq k$.
\begin{align*}
    \Lcal_{\Xbb}(\nn,\1(\cdot \in A);T) &= \sum_{t=1}^T \1_{x_t\in A}\1_{x_{\phi(t)}\notin A} + \1_{x_t\notin A}\1_{x_{\phi(t)}\in A}\\
    &\leq \sum_{t=1}^T \sum_{i=1}^k 1_{x_t\in A_i}\1_{x_{\phi(t)}\notin A} + \1_{x_t\notin A}\1_{x_{\phi(t)}\in A_i}\\
    &=\sum_{i=1}^k \Lcal_{\Xbb}(\nn,\1(\cdot \in A_i);T).
\end{align*}
Since $A_i\in \Fcal_\Xbb$ we have $\Lcal_{\Xbb}(\nn,\1(\cdot \in A_i);T) \rightarrow 0, (a.s).$ Therefore, we obtain directly $\Lcal_{\Xbb}(\nn,\1(\cdot \in A);T) \rightarrow 0, (a.s)$ i.e. $A\in \Fcal_\Xbb$.

We now show that $\Fcal_\Xbb$ contains all intervals of the form $(-\infty,a)$ and $(-\infty,a]$ for $a\in \Rbb$. Let $f^*=\1_{\cdot\in (-\infty,a)}$ or $f^*=\1_{\cdot\in (-\infty,a]}$ and consider the following countable partition
\begin{equation*}
    \Pcal:\quad \{a\}\cup \bigcup_{i\in \Zbb}[a+2^i,a+2^{i+1}) \cup \bigcup_{i\in \Zbb}(a-2^{i+1},a-2^i].
\end{equation*}
For any $t\geq 1$, let $P_t\in \Pcal$ the set of the partition in which $x_t$ falls. Observe that by construction, if there exists $u<t$ such that $x_u\in P_t$, then nearest neighbor classifies $x_t$ correctly. Indeed, assuming that $x_t>a$, we can write and $P_t  = (a+2^i,a+2^{i+1}]$. Then we have  $|x_{\phi(t)}-x_t|\leq |x_u-x_t|$. Therefore, $x_{\phi(t)}\geq x_t - |x_u-x_t| > a+2^i - 2^i = a.$ Therefore, $y_{\phi(t)} = f^*(x_t)$. The case $x_t<a$ is symmetric, and the case $x_t=a$ is immediate. Thus, if nearest neighbor makes a mistake at time $t$, the input $x_t$ visited a new set of the partition:
\begin{equation*}
    \Lcal_{\Xbb}(\nn,f^*; T) \leq \frac{1}{T}|\{k\in \mathbb N: A_k\cap \mathbb{X}_{<T} \neq \emptyset \}|.
\end{equation*}
Because $\Xbb\in \smv$, we can apply the property to the countable partition $\Pcal$ and which yields $\Lcal_\Xbb(\nn,f^*;T)\to 0 ,\;(a.s.).$ This ends the proof of the theorem.
\end{proof}

\section{Conclusion}
\label{sec:conclusion}
We resolve an open problem of \cite{hanneke2021learning}. We present a novel reduction from a general (separable near-metric) setting to the binary classification setting in the context of universal online learning. This reduction shows that the stochastic processes admitting strong universal consistency for regression are exactly those admitting strong universal consistency for binary classification. Our proof technique is probabilistic but enjoys the property of transporting many natural learning rules such as nearest neighbour. We analyze this particular learning rule in the context of classification for finite union of intervals.

Though the nearest neighbour learning rule has already been extensively studied, there remain interesting questions related to its consistency. For a process $\Xbb$ in $\suol$, what is the class of functions $f^*$ for which nearest neighbour achieves strong consistency? In this paper, we showed in the context of binary classification that this class must contain finite unions of intervals, but the general class is possibly much larger.  Reciprocally, can we characterize the set of processes for which nearest neighbour is a strong universal online learning rule? 

On another note, this paper highlights the importance of the open problems formulated in \citep{hanneke2021open} for the binary classification setting --- the existence of an optimistically universal learning rule and the characterization of $\suol$. The present paper shows that any solution to these problems would transport from the binary classification setting to the general setting. The authors note that subsequently to this paper, the reduction presented in this work was applied by \cite{blanchard2022optimistically} to obtain optimistically universal learning rules for general metric value spaces.

\acks{Mo\"ise Blanchard is being partly funded by ONR grant N00014-18-1-2122.}

\bibliography{refs}

\appendix
\section{Proof of Proposition \ref{prop:binary_izi}}\label{ap:proof_binary_izi}
\begin{proof}
Let $y^0,y^1\in \Ycal$ such that $\ell(y^0,y^1):=\delta>0$. It suffices to observe that measurable functions $\Xcal\to\{0,1\}$ can be mapped to the measurable functions $\Xcal\to\{y^0,y^1\}$ by composing with the simple mapping $\phi$ such that $\phi(i)=y^i$ for $i\in \{0,1\}$. Consider a sequence $\Xbb\in \textnormal \suol_{(\Ycal,\ell)}$ and let $f_{\boldsymbol{\cdot}}$ be a universal learner for $\Xbb$, we will show that $\Xbb\in \suol_{(\{0,1\},\ell_{01})}$ by using this learner to perform binary classification. We define the learning rule $\hat f_{\boldsymbol{\cdot}} = (\hat f_t)_{t\geq 1}$  as follows, for any $x_{\leq t}\in \Xcal^{t}$ and $y_{<t}\in \{0,1\}^{t-1}$,
\begin{equation*}
    \hat f_t(x_{<t},y_{<t},x_t):= \begin{cases}
    0 & \text{if } \ell(f_t(x_{<t},\phi(y)_{<t},x_t),y^0)\leq \ell(f_t(x_{<t},\phi(y)_{<t},x_t),y^1)\\
    1 & \text{otherwise}.
    \end{cases}
\end{equation*}
where we used the notation $\phi(y):=(\phi(y_t))_{t\geq 1}$. Note that by relaxed triangle inequality,  
\begin{align*}
    \ell_{01}(\hat f_t(x_{<t},y_{<t},x_t),y_t) &\leq \1[\ell(f_t(x_{<t},\phi(y)_{<t},x_t),\phi(y_t))\geq \ell(f_t(x_{<t},\phi(y)_{<t},x_t),\phi(1-y_t))]\\
    &\leq \1[\ell(f_t(x_{<t},\phi(y)_{<t},x_t),\phi(y_t))\geq \frac{c_\ell}{2}\delta]\\
    &\leq \frac{2}{ c_\ell \delta} \ell(f_t(x_{<t},\phi(y)_{<t},x_t),\phi(y_t)).
\end{align*}
Then, for any measurable function $f^*:\Xcal\to\{0,1\}$ we have $\Lcal_{\Xbb}^{(\{0,1\},\ell_{01})}(\hat f_{\boldsymbol{\cdot}},f^*)\leq \frac{2}{ c_\ell \delta} \Lcal_{\Xbb}^{(\Ycal,\ell)}(f_{\boldsymbol{\cdot}},\phi\circ f^*)$, which by universal consistency of $f_{\boldsymbol{\cdot}}$ shows that $\Lcal_{\Xbb}^{(\{0,1\},\ell_{01})}(\hat f_{\boldsymbol{\cdot}},f^*)=0$ almost surely. Hence, $\hat f_{\boldsymbol{\cdot}}$ is a universal learner for the process $\Xbb$ for the setting $(\{0,1\},\ell_{01})$ i.e. $\Xbb\in \textnormal\suol_{(\{0,1\},\ell_{01})}$.
\end{proof}

\section{Proof of Lemma \ref{lemma:finite_classification}\label{ap:lemma_finite_classification}}
\begin{proof}
By Proposition \ref{prop:binary_izi}, it suffices to prove that any process $\Xbb\in \suol_{(\{0,1\},\ell_{01})}$ admits universal learning in the setup $([k],\ell_{01})$. To learn an unknown function $f^*:\Xcal\to[k]$, it suffices to learn the $k$ individual binary functions which predict each class: $f^{*,i}(\cdot):= \1(f^*(\cdot)=i)$ where $i\in [k]$. Given a universal learner $f_{\boldsymbol{\cdot}}$ for $\Xbb$ for binary classification, We can therefore consider a universal learner for $k-$multiclass classification $\hat f_{\boldsymbol{\cdot}}$ which follows the prediction of $f_{\boldsymbol{\cdot}}$ for all functions $f^i$ as follows: for any $x_{\leq t}\in \Xcal^t$ and $y_{<t}\in [k]^{t-1}$ we pose $\hat f_t(x_{<t},y_{<t},x_t):= \argmax_{1\leq i\leq k} f_t(x_{<t},\1(y=i)_{<t},x_t)$ where $\1(y=i)_{<t}$ denotes the sequence $\1(y_{t'}=i)_{t'\leq t}$. We can note that this learner makes a mistake only if $f_{\boldsymbol{\cdot}}$ made a mistake in the prediction of at least one of the functions $f^{*,i}$ for $1\leq i\leq k$. Thus,
\begin{equation*}
    \Lcal_{\Xbb}^{([k],\ell_{01})}(\hat f_{\boldsymbol{\cdot}},f^*) \leq \sum_{i=1}^k \Lcal_{\Xbb}^{(\{0,1\},\ell_{01})}(f_{\boldsymbol{\cdot}},f^{*,i}).
\end{equation*}
Then, $\Lcal_{\Xbb}^{([k],\ell_{01})}(\hat f_{\boldsymbol{\cdot}},f^*)=0$ almost surely by universal consistence of $f_{\boldsymbol{\cdot}}$ which shows that $\hat f_{\boldsymbol{\cdot}}$ is optimistically universal for $\Xbb$ and $k-$multiclass classification.
\end{proof}

\section{Proof of Theorem \ref{thm:invariance_optimistic}}
\label{appendix:invariance_optimistic}

\begin{proof}[of Theorem \ref{thm:invariance_optimistic}]
We start by supposing that there exists an optimistically universal learning rule $f_{\boldsymbol{\cdot}}^{\{0,1\}}$ for the binary classification setting, and now construct an optimistically universal learning rule for a general setting $(\Ycal,\ell)$ satisfying $0<\bar \ell<\infty$. This results from the fact that the construction in the proofs of both Theorem \ref{thm:binary_countable} and Theorem \ref{thm:reduction_countable} are invariant to $\Xbb$. Precisely, we first construct an optimistically universal learning rule for countably-many classification as given in the proof of Theorem \ref{thm:reduction_countable}. With
\begin{multline*}
    p^t(x_{<t},y_{<t},x_t;i) :=\frac{1}{2}\left( \Pbb_{\sigma}\left[f_t^{\{0,1\}}(x_{<t},\1(y\in \sigma)_{<t},x_t) = 1 \mid  i\in \sigma \right] \right.\\+ \left.\Pbb_{\sigma}\left[f_t^{\{0,1\}}(x_{<t},\1(y\in \sigma)_{<t},x_t) = 0 \mid  i\notin\sigma \right]\right),
\end{multline*}
we define
\begin{equation*}
    f_t^\Nbb(x_{<t},y_{<t},x_t):= \begin{cases}
    \displaystyle{ \min_{i\in \Nbb}} \left\{i,\; p^t(x_{<t},y_{<t},x_t;i)>\frac{3}{4} \right\} & \text{if }\exists i\in \Nbb,\; p^t(x_{<t},y_{<t},x_t;i)>\frac{3}{4} \\
    0 & \text{otherwise}.
    \end{cases}
\end{equation*}
By construction, and Theorem \ref{thm:binary_countable}, $f_{\boldsymbol{\cdot}}^\Nbb$ is an optimistically universal learning rule for $(\Nbb,\ell_{01})$. We now use the construction given by Theorem \ref{thm:reduction_countable} to get an optimistically universal learning rule $f_{\boldsymbol{\cdot}}^{(\Ycal,\ell)}$ for $(\Ycal,\ell)$. Define a sequence $(y^i)_{i\geq 1}$ dense in $\Ycal$ with respect to $\ell$. For $k\geq 1$ and $\epsilon_k=2^{-k}$, we define the functions $h_k(y) = \inf\{i\geq 1,l(y^i,y)<\epsilon_k\}$ and construct the learning rules $f_{\boldsymbol{\cdot}}^k$ by
\begin{equation*}
    f^k_t(x_{< t},y_{<t},x_t) = y^{f^\Nbb_t(x_{< t},h_k(y_{<t}),x_t)}.
\end{equation*}
Denoting by $B_\ell(y,\epsilon)=\{y'\in \Ycal, \ell(y,y')<\epsilon\}$ and $\Bcal_i^k:= B_{\ell}(y_i,\epsilon_k)\setminus \bigcup_{1\leq j<i} B_{\ell}(y_i,\epsilon_k)$, we now define our final learning rule
\begin{equation*}
    f_t^{(\Ycal,\ell)}(x_{\leq t},y_{<t},x_t) = f^{\hat p}_t(x_{\leq t},y_{<t},x_t) \text{ for }\hat p=\max\left \{1\leq  p\leq t, \bigcap_{1\leq k\leq p} \Bcal_{f^k_t(x_{\leq t},y_{<t},x_t)}^k \neq \emptyset \right\},
\end{equation*}
which is invariant to the process $\Xbb$, hence optimistically universal by the proof of Theorem \ref{thm:reduction_countable}.

We now show the converse. Suppose there exists some setting $(\Ycal,\ell)$ with $0<\bar\ell<\infty$ admitting an optimistically universal learner $f_{\boldsymbol{\cdot}}^{(\Ycal,\ell)}$. We will construct an optimistically universal learning rule for binary classification using the proof of Proposition \ref{prop:binary_izi}. Let $y^0,y^1\in \Ycal$ such that $\ell(y^0,y^1)>0$ and consider the function defined by $\phi(i)=y^i$ for $i\in\{0,1\}$. We now construct a learning rule $f_{\boldsymbol{\cdot}}^{\{0,1\}}$ for binary classification as follows
\begin{equation*}
    f_t^{\{0,1\}}(x_{<t},y_{<t},x_t):= \begin{cases}
    0 & \text{if } \ell(f_t^{(\Ycal,\ell)}(x_{<t},\phi(y)_{<t},x_t),y^0)\leq \ell(f_t^{(\Ycal,\ell)}(x_{<t},\phi(y)_{<t},x_t),y^1)\\
    1 & \text{otherwise}.
    \end{cases}
\end{equation*}
This learning rule is invariant to $\Xbb$, hence optimistically universal by the proof of Proposition \ref{prop:binary_izi}. This ends the proof of the theorem.
\end{proof}

\section{Additional background}
\label{appendix:alternative_reduction}
In the core of the paper, we presented the two inclusions $\suol_{(\Nbb,\ell_{01})} \subset \suol_{(\Ycal,\ell)}\subset \suol_{(\{0,1\},\ell_{01})}$ shown in \cite{hanneke2021learning} for general bounded loss settings $(\Ycal,\ell)$ (Prop \ref{prop:binary_izi} and Theorem \ref{thm:reduction_countable}). The results of \cite{hanneke2021learning} offer more details which are not useful for this paper but give perspective on previous state of the art as well as useful intuitions. Specifically, the set $\suol_{(\Ycal,\ell)}$ only depends on whether the value space $(\Ycal, \ell)$ is \emph{totally bounded}. We say that $(\Ycal,\ell)$ is totally bounded if it can be covered by a finite number of $\epsilon-$balls, i.e. $\forall \epsilon>0, \exists \Ycal_{\epsilon}\subset \Ycal$ s.t. $\#\Ycal_{\epsilon} <\infty$ and $\sup_{y\in \Ycal}\inf_{y\in \Ycal_\epsilon}\ell(y_\epsilon,y)\leq \epsilon$. Note that $(\{0,1\},\ell_{01})$ is totally bounded whereas $(\Nbb,\ell_{01})$ is not. \cite{hanneke2021learning} proved that any setup could be reduced to these two cases.

\begin{theorem}[\cite{hanneke2021learning}]
\label{thm:alternative_reduction}
For any separable near-metric space $(\Ycal,\ell)$ with $0<\bar \ell<\infty$,
\begin{itemize}
    \item If $\Ycal$ is totally bounded, $\textnormal\suol_{(\Ycal,\ell)}= \textnormal\suol_{(\{0,1\},\ell_{01})}$,
    \item If $\Ycal$ is not totally bounded, $\textnormal\suol_{(\Ycal,\ell)}= \textnormal\suol_{(\Nbb,\ell_{01})}$.
\end{itemize}
\end{theorem}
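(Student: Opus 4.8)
The plan is to establish both equalities by combining the two known inclusions from the excerpt with the main reduction theorem, splitting into the totally bounded and non-totally-bounded cases. Recall that for any separable near-metric space $(\Ycal,\ell)$ with $0<\bar\ell<\infty$, Proposition \ref{prop:binary_izi} and Theorem \ref{thm:reduction_countable} together give the sandwich $\suol_{(\Nbb,\ell_{01})}\subset\suol_{(\Ycal,\ell)}\subset\suol_{(\{0,1\},\ell_{01})}$, and Theorem \ref{thm:binary_countable} closes the loop by proving $\suol_{(\{0,1\},\ell_{01})}\subset\suol_{(\Nbb,\ell_{01})}$. Chaining these yields $\suol_{(\Nbb,\ell_{01})}=\suol_{(\{0,1\},\ell_{01})}$, so in fact all three coincide unconditionally; the point of this theorem is to exhibit the intermediate value space as equal to whichever endpoint it structurally resembles, with explicit learning rules, and the total boundedness hypothesis is what lets the finite-cover argument avoid the delicate probabilistic machinery of Theorem \ref{thm:binary_countable}.

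For the totally bounded case, I would argue $\suol_{(\Ycal,\ell)}=\suol_{(\{0,1\},\ell_{01})}$ directly. The inclusion $\suol_{(\Ycal,\ell)}\subset\suol_{(\{0,1\},\ell_{01})}$ is immediate from Proposition \ref{prop:binary_izi}. For the reverse inclusion, I would fix $\Xbb\in\suol_{(\{0,1\},\ell_{01})}$ and a target $f^*:\Xcal\to\Ycal$, and for each $\epsilon>0$ use the finite cover $\Ycal_\epsilon=\{y^1,\dots,y^{N_\epsilon}\}$ to define the quantized target $h_\epsilon\circ f^*$ taking values in the finite set $[N_\epsilon]$. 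By Lemma \ref{lemma:finite_classification}, $\Xbb$ admits universal learning for $([N_\epsilon],\ell_{01})$, so one can learn $h_\epsilon\circ f^*$ with vanishing error rate; predicting the representative $y^{h_\epsilon\circ f^*}$ then incurs loss at most $\epsilon$ whenever the finite-classification prediction is correct and at most $\bar\ell$ otherwise, giving $\Lcal_\Xbb^{(\Ycal,\ell)}(\cdot,f^*)\leq\epsilon+\bar\ell\cdot\Lcal_\Xbb^{([N_\epsilon],\ell_{01})}(\cdot,h_\epsilon\circ f^*)\to\epsilon$ almost surely. Running these rules for $\epsilon_k=2^{-k}$ and selecting the most refined consistent prediction, exactly as in the proof of Theorem \ref{thm:reduction_countable} but with finite cover sizes, drives the loss to zero by a union bound over $k$. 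The key use of total boundedness is that $N_\epsilon<\infty$, so Lemma \ref{lemma:finite_classification} (rather than the countable reduction) suffices at each scale.

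For the non-totally-bounded case, I would show $\suol_{(\Ycal,\ell)}=\suol_{(\Nbb,\ell_{01})}$. The inclusion $\suol_{(\Nbb,\ell_{01})}\subset\suol_{(\Ycal,\ell)}$ is exactly Theorem \ref{thm:reduction_countable}. For the reverse, $\suol_{(\Ycal,\ell)}\subset\suol_{(\Nbb,\ell_{01})}$, I would combine $\suol_{(\Ycal,\ell)}\subset\suol_{(\{0,1\},\ell_{01})}$ (Proposition \ref{prop:binary_izi}) with the main result $\suol_{(\{0,1\},\ell_{01})}\subset\suol_{(\Nbb,\ell_{01})}$ (Theorem \ref{thm:binary_countable}); chaining gives the claim without any further construction. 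Note that with the full machinery these two cases collapse into a single equality $\suol_{(\Nbb,\ell_{01})}=\suol_{(\{0,1\},\ell_{01})}$, so the non-totally-bounded statement is really just Theorem \ref{thm:reduction_countable} combined with the binary-to-countable reduction.

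The main obstacle is not the logical chaining, which is mechanical, but constructing the learning rule cleanly in the totally bounded case so that the dependence on $\epsilon$ is handled uniformly; the subtlety is that the convergence time of the finite-classification learner at scale $\epsilon_k$ depends on $k$, so one must, as in Theorem \ref{thm:reduction_countable}, interleave the scales and use the ``most refined consistent prediction'' selection rather than committing to a single $\epsilon$. Here the finiteness of each cover is exactly what removes the non-uniformity difficulty that plagued the countable case (discussed in the remarks on Open Problem 3): since each scale uses only finitely many binary predictors whose errors are controlled simultaneously by Lemma \ref{lemma:finite_classification}, there is no residual tail of uncontrolled predictors, and the union bound over $k$ closes the argument straightforwardly.
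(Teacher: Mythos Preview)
Your totally bounded case is essentially the paper's argument: replay the proof of Theorem \ref{thm:reduction_countable} but with finite covers at each scale, so that Lemma \ref{lemma:finite_classification} (finite-to-binary) replaces the countable reduction. That part matches.

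The non-totally-bounded case is where your route diverges. You obtain $\suol_{(\Ycal,\ell)}\subset\suol_{(\Nbb,\ell_{01})}$ by chaining Proposition \ref{prop:binary_izi} with Theorem \ref{thm:binary_countable}. This is logically correct, but notice that it never uses the hypothesis that $\Ycal$ is not totally bounded --- as you yourself observe, your argument collapses both cases into the single equality $\suol_{(\Nbb,\ell_{01})}=\suol_{(\{0,1\},\ell_{01})}$. The paper's (and Hanneke's original) proof is different and self-contained: non-total-boundedness yields an $\epsilon$-separated countable sequence $(y^k)_{k\geq 1}$ in $\Ycal$, and the map $\phi:i\mapsto y^i$ lets one embed any countable-classification task $f^*:\Xcal\to\Nbb$ as a $(\Ycal,\ell)$-task $\phi\circ f^*$, with the loss comparison $\Lcal_{\Xbb}^{(\Nbb,\ell_{01})}(\hat f_{\boldsymbol\cdot},f^*)\leq \tfrac{2}{c_\ell\epsilon}\,\Lcal_{\Xbb}^{(\Ycal,\ell)}(f_{\boldsymbol\cdot},\phi\circ f^*)$ exactly as in Proposition \ref{prop:binary_izi} but with countably many points instead of two. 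This gives $\suol_{(\Ycal,\ell)}\subset\suol_{(\Nbb,\ell_{01})}$ directly, without invoking Theorem \ref{thm:binary_countable}. The distinction matters because Theorem \ref{thm:alternative_reduction} is a \emph{prior} result from \cite{hanneke2021learning}, proved before the binary-to-countable reduction existed; the direct embedding argument is what makes it stand on its own and is the reason the total-boundedness dichotomy is meaningful in the original paper.
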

We will now give some intuition on the first point, which reduces the totally bounded setting to $k-$multiclass classification for $k\geq 2$. Finite multiclass classification can then be reduced to binary classification through Lemma \ref{lemma:finite_classification}. It will be useful to keep in mind the proof technique of this reduction for our main result, though it will reveal insufficient to reduce $(\Nbb,\ell_{01})$ to binary classification.

\paragraph{Sketch of proof of Theorem \ref{thm:alternative_reduction}.} By Theorem \ref{thm:reduction_countable}, we know that for any general setting, $(\Ycal, \ell)$ we have $\textnormal\suol_{(\Nbb,\ell_{01})} \subset \textnormal\suol_{(\Ycal,\ell)}$. The question is now, in which cases can we further reduce the setting to binary classification? Assume that in the construction of the proof of Theorem \ref{thm:reduction_countable}, the partition $(\Bcal_i^\epsilon)_{i\geq 1}$ of $\Ycal$ into balls of size at most $\epsilon>0$ can always be made finite. Then, we are able to construct an universally consistent learning rule from universally consistent rules for finitely-many classification, which is equivalent to universal consistence for binary classification by Lemma \ref{lemma:finite_classification}. Thus, we obtain the alternative $\textnormal\suol_{(\Ycal,\ell)}= \textnormal\suol_{(\{0,1\},\ell_{01})}$.

If this is not the case, there exists $\epsilon>0$ and an infinite---countable--- sequence $(y^k)_{k\geq 1}$ in $\Ycal$ which is $\epsilon-$separated i.e. such that $\ell(y^i,y^j)\geq\epsilon$ for any $i\neq j$. Using the mapping $\phi:\Nbb\to \Ycal$ defined by $\phi(i)=y^k$ for all $k\geq 1$ similarly to the construction in the proof of Proposition \ref{prop:binary_izi}, from a universal learner $f_{\boldsymbol{\cdot}}$ for $(\Ycal,\ell)$ we construct a learning rule $\hat f_{\boldsymbol{\cdot}}$ for $(\Nbb,\ell_{01})$, such that for any measurable function $f^*:\Xcal\to\Nbb$,
\begin{equation*}
     \Lcal_{\Xbb}^{(\Nbb,\ell_{01})}(\hat f_{\boldsymbol{\cdot}},f^*)\leq \frac{2}{ c_\ell \epsilon} \Lcal_{\Xbb}^{(\Ycal,\ell)}(f_{\boldsymbol{\cdot}},\phi\circ f^*),
\end{equation*}
which shows that almost surely, $\Lcal_{\Xbb}^{(\Nbb,\ell_{01})}(\hat f_{\boldsymbol{\cdot}},f^*)=0$. Therefore, any sequence which admits universal learning for $(\Ycal, \ell)$ must admit universal learning for $(\Nbb,\ell_{01})$ i.e. $\textnormal\suol_{(\Ycal,\ell)}\subset \textnormal\suol_{(\Nbb,\ell_{01})}$. This ends the alternative of the theorem.
\end{document}